\newif\ifcomments
\newcommand\KG[1]{\textcolor{magenta}{[Karan: #1]}}
\newcommand\AG[1]{\textcolor{red}{[Albert: #1]}}
\newcommand\SL[1]{\textcolor{blue}{#1}}
\newcommand\CR[1]{\textcolor{red}{[CR: #1]}}
\newcommand\KG[1]{}
\newcommand\AG[1]{}
\newcommand\SL[1]{}
\newcommand\CR[1]{}
\newcommand\hidden[1]{}
\newcommand{\methodabbrv}{CAMEL}
\DeclareMathOperator*{\argmax}{arg\,max}
\newcommand{\E}{\mathbb{E}}
\newcommand{\R}{\mathbb{R}}
\newcommand{\mt}{\tilde{m}}
\newcommand{\xt}{\tilde{x}}
\newcommand{\Yh}{\hat{Y}}
\newcommand{\at}{\tilde{a}}
\newcommand{\tA}{\tilde{A}}
\newcommand{\tB}{\tilde{B}}
\newcommand\blfootnote[1]{%
  \begingroup
  \renewcommand\thefootnote{}\footnote{#1}%
  \addtocounter{footnote}{-1}%
  \endgroup
}
\newtheorem{proposition}{Proposition}
\newtheorem{definition}{Definition}
\newtheorem{corollary}{Corollary}
\newtheorem{lemma}{Lemma}
\declaretheorem[name=Definition]{defn}
\title{Model Patching: Closing the Subgroup Performance Gap with Data Augmentation}
\newlength{\defbaselineskip}
\author[$*$]{Karan Goel}
\author[$*$]{Albert Gu}
\author{Yixuan Li}
\author{Christopher R{\'e}}
\affil[]{Department of Computer Science, Stanford University\vspace{4pt}}
\affil[ ]{\texttt{\{krng,albertgu\}@stanford.edu}, \texttt{\{sharonli,chrismre\}@cs.stanford.edu}}
\author{%
  Karan Goel$^*$,
  Albert Gu$^*$,
  Sharon Li,
  Christopher R\'{e} \\
  Department of Computer Science, Stanford University \\
  \texttt{\{krng,albertgu\}@stanford.edu},
  \texttt{\{sharonli,chrismre\}@cs.stanford.edu}
}
\begin{document}

\maketitle

\begin{abstract}
Classifiers in machine learning are often brittle when deployed. 
Particularly concerning are models with inconsistent performance on specific \emph{subgroups} of a class, 
e.g., exhibiting disparities in skin cancer classification in the presence or absence of a spurious bandage.
To mitigate these performance differences, 
we introduce \emph{model patching}, 
a two-stage framework for improving robustness that encourages the model to be invariant to subgroup differences, and focus on class information shared by subgroups.
Model patching
first models subgroup features within a class and learns semantic transformations between them,
and then trains a classifier with data augmentations that deliberately manipulate subgroup features.
We instantiate model patching with {\methodabbrv}, which (1) uses a CycleGAN to learn the intra-class, inter-subgroup augmentations, and (2) balances subgroup performance using a theoretically-motivated subgroup consistency regularizer, accompanied by a new robust objective.
We demonstrate \methodabbrv's effectiveness on 3 benchmark datasets, with reductions in robust error of up to 33\% relative to the best baseline. Lastly, \methodabbrv~successfully patches a model that fails due to spurious features on a real-world skin cancer dataset.\blfootnote{Code for Model Patching can be found at \url{https://github.com/HazyResearch/model-patching}.}
\end{abstract}
\section{Introduction} 
\label{sec:intro}
Machine learning models typically optimize for average performance, 
and when deployed, can yield inaccurate predictions on important subgroups of a class.
For example, practitioners have noted that on the ISIC skin cancer detection dataset~\cite{codella2018skin}, classifiers are more accurate on images of benign skin lesions with visible bandages, when compared to benign images where no bandage is present~\cite{Bissoto2019DeCB, Rieger2019InterpretationsAU}. 

\begin{wrapfigure}{R}{0.45\textwidth}%
    \centering
    \includegraphics[width=0.95\linewidth]{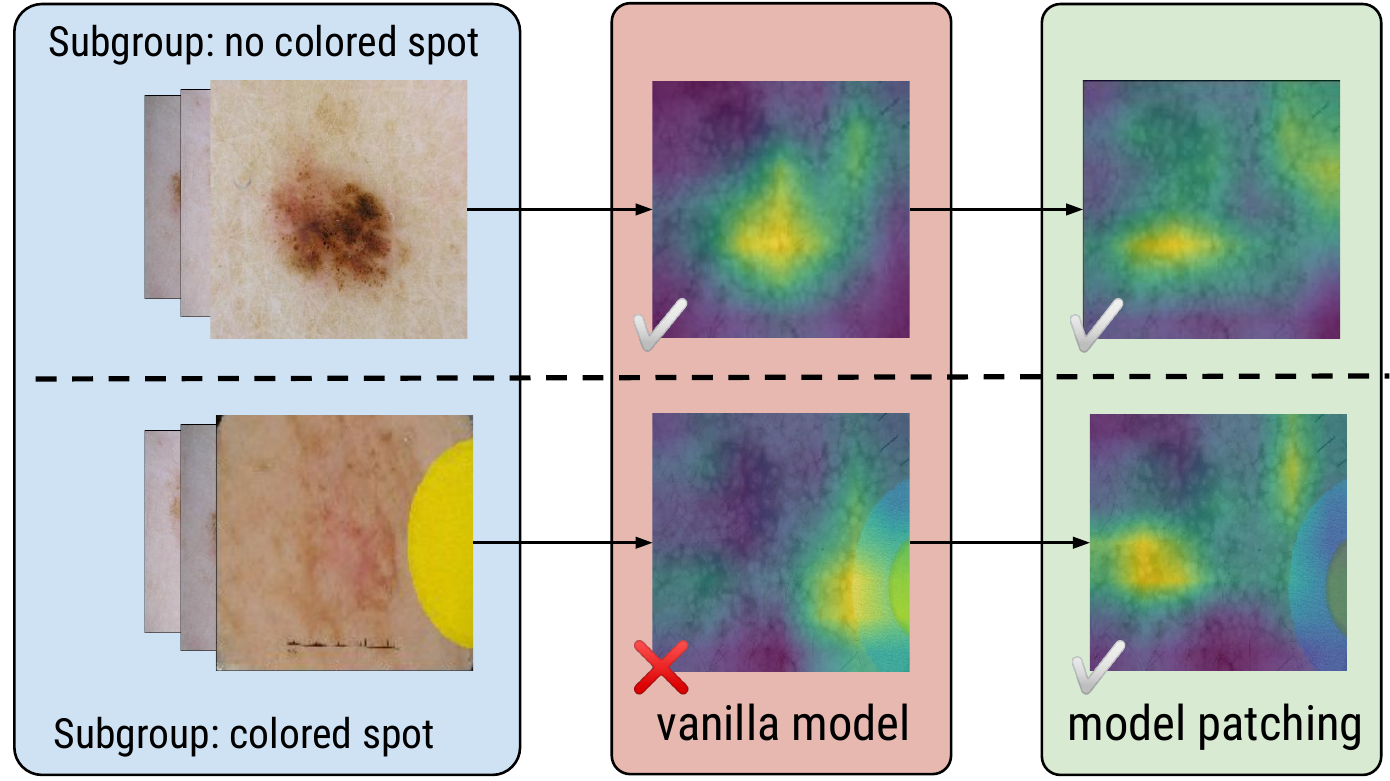}
    \caption{\small A vanilla model trained on a skin cancer dataset exhibits a {subgroup performance gap} between images of malignant cancers with and without colored bandages. GradCAM~\cite{selvaraju2017grad} illustrates that the vanilla model spuriously associates the colored spot with {benign} skin lesions. With model patching, the malignancy is predicted correctly for both subgroups.}%
    \label{fig:isic}
\end{wrapfigure}

This {subgroup performance gap} is an undesirable consequence of a classifier's reliance 
on subgroup-specific features, e.g. spuriously associating colorful bandages with a benign cancer class (Figure~\ref{fig:isic}). 
A common strategy to side-step this issue is to use manual data augmentation to erase the differences between subgroups, e.g., using Photoshop~\cite{winkler2019association} or image processing tools~\cite{Rieger2019InterpretationsAU} to remove markings on skin cancer data before retraining a classifier. 
However, hand-crafting these augmentations may be impossible if the subgroup differences are difficult to express.

Ideally, we would automatically learn the features differentiating the subgroups of a class, and then encourage a classifier to be invariant to these features when making its prediction. To this end, we introduce \emph{model patching}, a framework that encapsulates this solution in two stages:
\begin{itemize}
    \item {\it Learn inter-subgroup transformations.} Isolate features that differentiate subgroups within a class, learning inter-subgroup transformations
    between them. 
    These transformations change an example's subgroup identity but preserve the class label.
    \item {\it Train to patch the model.} Leverage the transformations as controlled data augmentations that manipulate subgroup features, encouraging the classifier to be robust to their variation. 
\end{itemize}

In the first stage of model patching (Section~\ref{sec:cyclegan}), we learn, rather than specify, the differences between the subgroups of a class. Our key insight here is to learn these differences as {inter-subgroup transformations} 
that modify the subgroup membership of examples, while preserving class membership. %
Applying these semantic transformations as data augmentations in the second stage allows us to generate ``imagined'' versions of an example in the other subgroups of its class. 
This contrasts with conventional data augmentation, where heuristics such as rotations, flips, MixUp or CutOut~\cite{zhang2017mixup, devries2017improved} are hand-crafted rather than learned. 
While these heuristics have been shown to improve robustness~\cite{hendrycks2019augmix}, the invariances they target are not well understood. %
Even when augmentations are learned~\cite{ratner2017learning}, they are used to address data scarcity, rather than manipulate examples to improve robustness in a prescribed way. 
Model patching is the first framework for data augmentation that directly targets subgroup robustness. 

The goal of the second stage (Section~\ref{sec:stage2}) is to appropriately use the transformations to remove the classifier's dependence on subgroup-specific features. We introduce two algorithmic innovations that target subgroup robustness: 
(i) a subgroup robust objective and; 
(ii) a subgroup consistency regularizer. 
Our subgroup robust objective extends prior work on group robustness \cite{sagawa2019distributionally} to our subgroup setting, where classes and subgroups form a hierarchy (Figure~\ref{fig:splash} left).
Our new subgroup consistency regularizer constrains the predictions on original and augmented examples to be similar. While recent work on consistency training~\cite{hendrycks2019augmix, xie2019unsupervised} has been empirically successful in constructing models that are robust to perturbations, our consistency loss carries theoretical guarantees on the model's robustness.
We note that our changes are easy to add on top of standard classifier training.

We contribute a theoretical analysis (Section~\ref{sec:invariance-analysis}) to motivate our end-to-end framework. Our analysis codifies the distributional assumptions underlying the class-subgroup hierarchy and motivates our new consistency regularizer, which has a simple information theoretic interpretation under this framework.
First, we introduce a natural model for the data generating process that decouples an example from its subgroup. %
Under this model, the mutual information between the subgroup information carried by the data and the classifier's output is related to a particular Jensen-Shannon divergence that is captured by our subgroup consistency loss.
This enables us to prove that our consistency loss, when applied to subgroup-augmented examples from the first stage, directly bounds a mutual information objective capturing the subgroup-invariance of the trained classifier. Thus, training with our end-to-end framework forces the classifier to be invariant to subgroup-specific features.

We conduct an extensive empirical study (Section~\ref{sec:experiments}) that validates CycleGAN Augmented Model Patching (\methodabbrv)'s ability to improve subgroup invariance and robustness.
We first evaluate \methodabbrv~on a controlled MNIST setup, where it cuts robust error rate to a third of other approaches while learning representations that are far more invariant, as measured by mutual information estimates. 
On two machine learning benchmarks CelebA and Waterbirds, \methodabbrv~consistently outperforms state-of-the-art approaches that rely on robust optimization, with reductions in subgroup performance gap by up to $10\%$. 
Next, we perform ablations on each stage of our framework: (i) replacing the CycleGAN with state-of-the-art heuristic augmentations worsens the subgroup performance gap by $3.35\%$; (ii) our subgroup consistency regularizer improves robust accuracy by up to $2.5\%$ over prior consistency losses. 
As an extension, we demonstrate that \methodabbrv~can be used in {combination} with heuristic augmentations, providing further gains in robust accuracy of $1.5\%$. 
Lastly, on the challenging real-world skin cancer dataset ISIC, \methodabbrv~improves robust accuracy by $11.7\%$ compared to a group robustness baseline. 

Our results suggest that model patching is a promising direction for improving subgroup robustness in real applications.
Code for reproducing our results  is available at \url{https://github.com/HazyResearch/model-patching}.%

\begin{figure*}
    \centering
    \includegraphics[width=\textwidth]{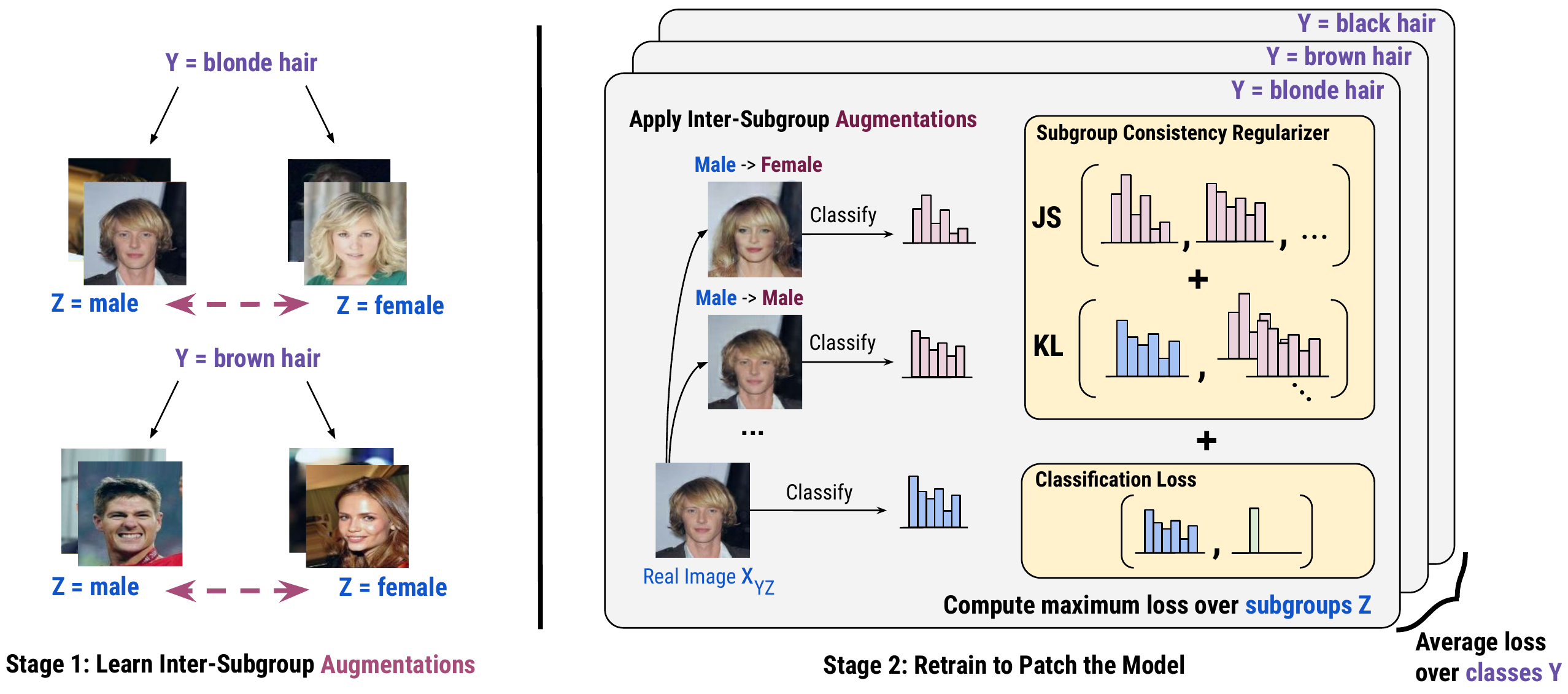}
    \caption{\small {\bf The model patching framework.} (Left) The class-subgroup hierarchy with each class $Y$ divided into subgroups (e.g. $Y={\rm blonde\,\, hair}$ into $Z \in \{ {\rm male}, {\rm female} \}$). We learn inter-subgroup augmentations to transform examples between subgroups of a class. (Right) To patch the classifier, we augment examples by changing their subgroup membership and then train with our subgroup consistency loss and robust objective.} %
    \label{fig:splash}
\end{figure*}

\section{\methodabbrv: CycleGAN Augmented Model Patching}
\label{sec:method}

In this section, we walk through \methodabbrv's two-stage framework (Figure~\ref{fig:splash}) in detail. In Section~\ref{sec:cyclegan}, we introduce Stage 1 of model patching, learning class-conditional transformations between subgroups. In Section~\ref{sec:stage2}, Stage 2 uses these transformations as black-box augmentations to train a classifier using our new subgroup robust objective (Section~\ref{sec:subgroup_robustness_objective}) and consistency regularizer (Section~\ref{sec:consistency}).
Section~\ref{sec:invariance-analysis} outlines our theoretical analysis on the invariance guarantees of our method. A glossary for all notation is included in Appendix~\ref{sec:glossary}.

\paragraph{Setup.} We consider a classification problem where $\mathcal{X} \subset \mathbb{R}^n$ is the input space, and $\mathcal{Y}=\{1,2,\dots,C \}$ is a set of labels over $C$ classes.
Each class $y \in \mathcal{Y}$ may be divided into disjoint subgroups $Z_y \subseteq \mathcal{Z}$. %
Jointly, there is a distribution $P$ over examples, class labels, and subgroups labels $(X, Y, Z)$.
Given a dataset $\{(x_i, y_i, z_i)\}_{i=1}^m$, our goal is to learn a class prediction model $f_\theta : \mathcal{X} \to \Delta^C$ parameterized by $\theta$, where $\Delta^C$ denotes a probability distribution over $\mathcal{Y}$. %

\subsection{Stage 1: Learning Inter-Subgroup Transformations}
\label{sec:cyclegan}
The goal of the first stage is to learn transformations $F_{z \to z'} : \mathcal{X}_z \to \mathcal{X}_{z'}$ that translate examples in subgroup $z$ to subgroup $z'$, for every pair of subgroups $z, z' \in Z_y$ in the same class $y$. 

Recent work has made impressive progress on such cross-domain generative models, where examples from one domain are translated to another, ideally preserving shared semantics while only changing domain-specific features.
In this work, we use the popular CycleGAN model \cite{zhu2017unpaired} to learn mappings between pairs of subgroups, although we note that it is possible to substitute other models. Given datasets $\{x_z\}_{i=1}^p$, $\{x_{z'}\}_{i=1}^{p'}$ from a pair of subgroups $z, z' \in Z_y$, we train a CycleGAN $F_{z \to z'}$ to transform between them.
When classes have more than two subgroups, pairwise models can be trained between subgroups, or multi-domain models such as the StarGAN~\cite{choi2018stargan} can be used. 
We include a review of CycleGANs in Appendix~\ref{sec:cg_background}.

Given these transformations $\{F_{z\to z'} \}_{z, z' \in Z_y}$, we generate augmented data for every training example $(x, y, z)$ by passing it through all $F_{z \to z'}, z' \in Z_y$.
We denote these generated examples  %
 $\xt_{Z_y} := \{\xt_{z'}\}_{z' \in Z_y}$ where $\xt_{z'} = F_{z\to z'} (x)$.
For convenience, $k$ denotes the number of subgroups $|Z_y|$.

Prior work that uses data augmentation to improve robustness has mostly relied on heuristic augmentations~\cite{hendrycks2019augmix}, and focused on robustness to out-of-distribution examples~\cite{hendrycks2019augmix} with empirical studies. In contrast, we learn to transform examples rather than specifying augmentations directly, and focus on improving worst-case subgroup robustness. 
We emphasize that while others have used cross-domain generative models for data augmentation, our novelty lies in {targeting invariance to subgroup features} using this style of augmentation. Past work has focused on domain adaptation~\cite{Huang2018AugGANCD}, few-shot learning~\cite{antoniou2017data}, and data scarcity~\cite{bowles2018gan, Ratner2017LearningTC}, but has not attempted to explicitly control the invariance of the classifier using the learned augmentations. As we describe in our theoretical analysis (Section~\ref{sec:invariance-analysis}), our use of cross-domain models is a natural consequence of the class-subgroup setting.

\subsection{Stage 2: Subgroup Robustness with Data Augmentation}
\label{sec:stage2}
The goal of the second stage is to learn a classifier $f_\theta$ on both the original and augmented data from Stage 1, using our subgroup robust objective (Section~\ref{sec:subgroup_robustness_objective}) and consistency regularizer (Section~\ref{sec:consistency}).
Our robustness objective targets worst-case subgroup robustness, while our consistency regularizer forces the learned classifier to be invariant to subgroup features.
Where relevant, we include discussion here on differences to prior work, with an extended related work in Appendix~\ref{sec:related-work-full}.

\subsubsection{A Subgroup Robustness Objective}
\label{sec:subgroup_robustness_objective}

We review two established objectives for training classifiers with their associated metrics and loss functions, and introduce our new objective to target subgroup robustness (cf.\ Table~\ref{tab:losses_and_metrics}).

\begin{table}[t!]
    \centering
    \small
    \caption{\small Comparison of metrics and losses for classifier training. Here $P_z$ and $\hat{P}_z$ are marginal distributions of $(x, y)$ for the subgroup $z$, and $\alpha_\theta(x, y) = \mathbb{I}[(\argmax f_{\theta}(x))=y]$ denotes correct prediction on an example. }
    \resizebox{\linewidth}{!}{%
    \begin{tabular}{@{}lll@{}}
    \toprule
         & Metric of Interest & Loss $\mathcal{L}(\theta)$ \\
         \midrule
       ${\rm ERM}$  & $\mathbb{E}_{P} \alpha_\theta(x, y)$
    & $\mathbb{E}_{ \hat{P}} \ell(f_\theta(x),y)$ \\ %
       ${\rm GDRO}$ & $\min_{z\in \mathcal{Z}} \mathbb{E}_{P_z} \alpha_\theta(x, y)$
    & $\max_{z\in \mathcal{Z}} \mathbb{E}_{\hat{P}_z} \ell(f_\theta(x),y)$  \\
       ${\rm SGDRO}$ & $|\max_{z\in Z_y}\mathbb{E}_{P_{z}} \alpha_\theta(x, y) - \min_{z\in Z_y}\mathbb{E}_{P_z} \alpha_\theta(x, y) |$ & $\mathbb{E}_{y\in Y}\{ \max_{z\in Z_y} \mathbb{E}_{\hat{P_z}}\ell(f_\theta(x),y)\}$ \\
       \bottomrule
    \end{tabular}%
    }   %
    \label{tab:losses_and_metrics}
\end{table}

\paragraph{Prior work: Empirical Risk Minimization (ERM).}
The usual training goal is to maximize the 
{\em aggregate accuracy}, optimized using the empirical risk with respect to a proxy loss function (Table~\ref{tab:losses_and_metrics}, top).

\paragraph{Prior work: Group Robustness (GDRO).}
In our setting, aggregate performance is too coarse a measure of risk, since classes have finer-grained groups of interest.
This can be accounted for by optimizing the \emph{worst-case} performance over these groups.
Letting $P_z$ denote the conditional distribution of examples associated with subgroup $z\in\mathcal{Z}$,
the {\em robust accuracy} can be quantified by measuring the worst-case performance among all groups.
This can be optimized by minimizing the corresponding group robust risk (Table~\ref{tab:losses_and_metrics}, middle right).
A stochastic algorithm for this group distributionally robust optimization (GDRO) objective was recently proposed~\cite{sagawa2019distributionally}.

\paragraph{Class-conditional Subgroup Robustness (SGDRO).} The GDRO objective treats group structure as a flat hierarchy. While this approach accounts for worst-case subgroup performance, it loses the class-subgroup hierarchy of our setting.
Tailored to this setting, we create the SGDRO training objective (Table~\ref{tab:losses_and_metrics}, bottom right) to optimize class-conditional worst-case subgroup robustness, aggregated over all classes (Figure~\ref{fig:splash} right).
To measure subgroup robustness, we define the {subgroup performance gap} (Table~\ref{tab:losses_and_metrics}, bottom left) for a class as the gap between its best and worst performing subgroups.%

\subsubsection{Subgroup Invariance using a Consistency Regularizer}
\label{sec:consistency}
Standard models can learn to rely on spurious subgroup features when making predictions. Subgroup consistency regularization targets this problem by enforcing consistency on subgroup-augmented data, encouraging the classifier to become invariant to subgroup-features. 

Recall that Stage 2 connects to Stage 1 by receiving augmented data $\xt_{Z_y}$, representing ``imagined'' versions of an example $x$ in all other subgroups $z'$ of its class $y$.
We define the \emph{self-consistency} loss $\mathcal{L}_s$ and \emph{translation-consistency} loss $\mathcal{L}_t$ as follows, where $\mt = \frac{1}{k}\sum_z f_{\theta}(\xt_z)$ denotes the average output distribution on the augmented examples.

\begin{minipage}{.45\linewidth}
\begin{equation}
    \mathcal{L}_s(x, \xt_{Z_y}; \theta) = \frac{1}{k} \sum_{z \in Z_y} {\rm KL}\left( f_\theta(\xt_z) \| \mt \right) \label{eq:self-consistency}
\end{equation}
\end{minipage}
\begin{minipage}{.45\linewidth}
\begin{equation}
\mathcal{L}_t(x, \xt_{Z_y}; \theta) = {\rm KL}\left(f_\theta(x) \| \mt \right) \label{eq:translation-consistency}
\end{equation}
\end{minipage}

The self-consistency loss is the more important component, encouraging predictions on augmented examples to be consistent with each other.
As these augmented examples correspond to one ``imagined'' example per subgroup, self-consistency controls dependence on subgroup features.
Translation consistency additionally forces predictions on the original example to be similar to those of the average CycleGAN-translated examples, ignoring potential artifacts that the CycleGANs generate.

We note that consistency losses have been used before, e.g. UDA~\cite{xie2019unsupervised} and AugMix~\cite{hendrycks2019augmix} use different combinations of KL divergences chosen empirically.
Our regularization~\eqref{eq:self-consistency} is tailored to the model patching setting,
where it has a theoretical interpretation relating to subgroup invariance (Section~\ref{sec:invariance-analysis}).
We show empirical improvements over these alternate consistency losses in Section~\ref{subsection:analyzing_lambda}.

\paragraph{Overall Objective.}
The total consistency loss averages over all examples,
\begin{equation}
\mathcal{L}_c(\theta) = \frac{1}{2}\mathbb{E}_{(x,y)\sim P} \left[ \mathcal{L}_s(x, \xt_{Z_y}; \theta) + \mathcal{L}_t(x, \xt_{Z_y}; \theta)\right].
\label{eq:consistency-loss}
\end{equation}
Combining our SGDRO robust objective and the consistency loss with the consistency strength hyper-parameter $\lambda$ yields the final objective,
\begin{equation}
  \label{eq:overall-objective}
    \mathcal{L}_{\text{\methodabbrv}}(\theta) = \mathcal{L}_{\text{SGDRO}}(\theta) + \lambda \mathcal{L}_\text{c}(\theta).
\end{equation}

\section{An Information Theoretic Analysis of Subgroup Invariance}
\label{sec:invariance-analysis}

We introduce a framework to analyze our end-to-end approach (equation~\eqref{eq:overall-objective}),
showing that it induces subgroup invariances in the model's features.
First, we review a common framework for treating robustness over discrete groups
that aims to create \emph{invariances},
or independences between the learned model's features $\phi(X)$ and groups $Z$.
We then define a new model for the distributional assumptions underlying the subgroup setting,
which allows us to analyze {stronger invariance guarantees}
by minimizing a mutual information (MI) upper bound.
Formal definitions and full proofs are deferred to Appendix~\ref{sec:invariance-analysis-detailed}.

\paragraph{Prior work: Class-conditioned Subgroup Invariance.}
Prior work~\cite{ganin2016domain,li2018deep,long2018conditional} uses adversarial training to induce subgroup invariances of the form $(\phi(X) \perp Z) \mid Y$,
so that within each class, the model's features $\phi(X)$ appear the same across subgroups $Z$.
We call this general approach \emph{class-conditional domain adversarial training} (CDAT).
Although these works are motivated by other theoretical properties, we show that this approach attempts to induce the above invariance by minimizing a variational \emph{lower} bound of the corresponding mutual information.

\begin{restatable}{lmm}{miadversarial}
  \label{lmm:mi-adversarial}
  CDAT minimizes a lower bound on the mutual information $I(\phi(X); Z \mid Y)$.
\end{restatable}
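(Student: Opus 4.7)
The plan is to unpack what the adversarial objective in class-conditional domain adversarial training really optimizes, and then compare it to a standard variational lower bound on conditional mutual information.

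First I would make the CDAT objective concrete. In the formulations of \cite{ganin2016domain,li2018deep,long2018conditional}, one learns a feature map $\phi$ together with an auxiliary domain discriminator $d(\cdot \mid \phi(x), y)$ that outputs a distribution over subgroups $Z$, in a min--max game of the form
\begin{equation*}
  \min_{\phi}\,\max_{d}\; \mathbb{E}_{(x,y,z)\sim P}\bigl[\log d(z \mid \phi(x), y)\bigr] \;+\; (\text{class-prediction loss on } f\circ\phi).
\end{equation*}
The class-prediction term does not involve $Z$, so for the purpose of this lemma I can focus on the adversarial term $J(\phi, d) := \mathbb{E}_{P}\!\left[\log d(Z\mid \phi(X),Y)\right]$.

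Next I would invoke the Barber--Agakov variational bound. Writing
\[
  I(\phi(X);Z\mid Y) = H(Z\mid Y) - H(Z\mid \phi(X), Y),
\]
and noting that for any conditional distribution $q(z\mid \phi(x), y)$ the non-negativity of KL gives
\[
  H(Z\mid \phi(X), Y) \;\le\; -\mathbb{E}_{P}\!\left[\log q(Z\mid \phi(X), Y)\right],
\]
we obtain the lower bound
\[
  I(\phi(X);Z\mid Y) \;\ge\; H(Z\mid Y) + \mathbb{E}_{P}\!\left[\log q(Z\mid \phi(X), Y)\right].
\]
Specializing $q = d$ shows that, for any fixed $\phi$ and any $d$, the quantity $H(Z\mid Y) + J(\phi, d)$ is a lower bound on $I(\phi(X);Z\mid Y)$. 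The bound is tight when $d$ is taken to be the true posterior $p(z\mid \phi(x),y)$, which is precisely the maximizer of the inner problem over an unrestricted $d$.

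Finally I would combine these observations. Since $H(Z\mid Y)$ does not depend on $\phi$, the CDAT game is equivalent to
\[
  \min_{\phi}\,\max_{d}\; \Bigl[\, H(Z\mid Y) + J(\phi, d) \,\Bigr],
\]
where the bracketed quantity is a variational lower bound on $I(\phi(X);Z\mid Y)$ for every $(\phi, d)$. Thus CDAT minimizes (over $\phi$) this lower bound, establishing the lemma. The main subtlety I would be careful about is emphasizing the direction of the bound: the inner max over $d$ tightens the bound but does not flip it into an upper bound, so the outer min over $\phi$ only controls a lower estimate of the true conditional MI. This is exactly the gap that motivates the MI upper-bound approach the authors develop next, and I would flag it as the conceptual payoff of the lemma.
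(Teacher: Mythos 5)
Your proof is correct and follows essentially the same route as the paper: both decompose $I(\phi(X);Z\mid Y) = H(Z\mid Y) - H(Z\mid \phi(X),Y)$ and apply the variational (Barber--Agakov) lower bound obtained by replacing the true posterior $p(z\mid\phi(x),y)$ with the domain head's parametrized conditional, identifying the resulting cross-entropy term (up to the constant $H(Z\mid Y)$) with the CDAT adversarial objective. Your explicit framing of the min--max game and the remark that the inner maximization only tightens, but does not reverse, the bound is a helpful clarification consistent with the paper's discussion.
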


Since the model's features matter only insofar as they affect the output, for the rest of this discussion we assume without loss of generality that $\phi(X) = \hat{Y}$ is simply the model's prediction.

\paragraph{A Natural Distributional Assumption: Subgroup Invariance on Coupled Sets.}
\begin{wrapfigure}{R}{0.23\textwidth}%
\vspace*{-1em}
  \captionsetup{width=\linewidth}
    \centering
    \includegraphics[width=\linewidth]{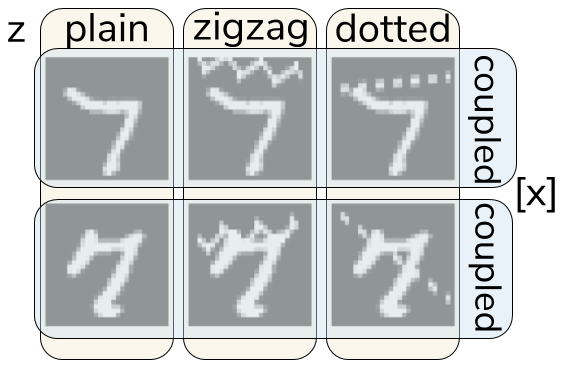}
    \caption{Coupled sets for subgroups of the $Y=7$ class.}%
    \vspace*{-1em}
    \label{fig:coupledset}
\end{wrapfigure}
Although prior work generally has no requirements on how the data $X$ among the groups $Z$ relate to each other,
we note that a common implicit assumption is that there is a ``correspondence'' between examples among different groups.
We codify this distributional assumption explicitly. %

Informally, we say that every example $x$ belongs to a \emph{coupled set} $[x]$, containing one example per subgroup in its ($x$'s) class (Figure~\ref{fig:coupledset}) (\cref{sec:distribution}, \cref{def:coupled-sets}).
$[X]$ is the random variable for coupled sets, i.e. it denotes sampling an example $x$ and looking at its coupled set.
Intuitively, $x' \in [x]$ represent hidden examples in the world that have identical class features to $x$ and differ only in their subgroup features. These hidden examples may not be present in the train distribution and model patching ``hallucinates'' them, allowing models to directly learn relevant class features.

This idea of coupled sets underlies both stages of the framework
and enables stronger invariance guarantees. %
Given this notion, all examples $x$ in a coupled set $[x]$ should have identical predictions in order to be robust across subgroups, modeled by the desired invariance $(\hat{Y} \perp Z) \mid [X]$.
Parallel to Lemma~\ref{lmm:mi-adversarial}, we aim to minimize $I(\hat{Y};Z \mid [X])$.
Note that
$I(\hat{Y};Z \mid [X]) \ge I(\hat{Y};Z \mid Y)$,
which follows from the chain rule for MI (proof in Appendix~\ref{sec:invariance-analysis-detailed}),
so this is a stronger notion of invariance than CDAT permits.
Additionally, the losses from the CycleGAN (Stage 1) and consistency regularizer (Stage 2) combine to form an upper bound on the mutual information rather than a lower bound,
so that optimizing our loss is more appropriate.

\begin{restatable}{thm}{thmmain}%
  \label{thm:main}
  For a model $f_{\theta}$ with outputs $\Yh$, the MI $I(\Yh; Z \mid [X])$ is the Jensen-Shannon Divergence (JSD) of predictions on coupled sets
  $\mathbb{E}_{[x] \sim [X]} JSD\left( f_\theta(x))_{x\in [x]} \right)$.
  In the case of $k=2$ subgroups per class, this can be upper bounded by the CycleGAN and consistency losses
  \begin{align*}
    \E_{(x, y) \sim (X, Y)} \big( \mathcal{L}_{s}(x; \xt_{Z_y}; \theta)^{\frac{1}{2}} + \sum_{z \in Z_y} \mathcal{L}_{CG}^{z}(x; \theta)^{\frac{1}{2}} \big)^2.
  \end{align*}
  In particular, the global optimum of the trained \methodabbrv\ model induces $\Yh \perp Z \mid [X]$.
\end{restatable}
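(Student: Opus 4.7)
My plan is to treat the three conclusions of the theorem in order: (i) the identity between the conditional mutual information and the expected JSD over coupled sets, (ii) the upper bound in terms of the self-consistency and CycleGAN losses for $k=2$, and (iii) the independence statement at the optimum. The whole argument hinges on two standard facts: the definition of JSD in terms of KL to the mixture, and the Endres--Schindelin result that $\sqrt{\mathrm{JSD}}$ is a metric on probability distributions.

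First I would unpack $I(\Yh;Z\mid [X])$ using the chain-rule-style decomposition $I(\Yh;Z\mid [X]) = \mathbb{E}_{[x]}\bigl[H(\Yh \mid [X]{=}[x]) - H(\Yh\mid Z,[X]{=}[x])\bigr]$. By the coupled-set assumption (\cref{def:coupled-sets}), conditioning on $[X]{=}[x]$ and $Z{=}z$ picks out the unique element $x_z \in [x]$ in subgroup $z$, so $P(\Yh \mid Z{=}z, [X]{=}[x]) = f_\theta(x_z)$, and, averaging over the (uniform) conditional $P(Z\mid [X])$, the outer distribution is $\bar m_{[x]} = \frac{1}{k}\sum_{z} f_\theta(x_z)$. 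Substituting and recognizing the defining formula of JSD immediately gives $I(\Yh; Z \mid [X]) = \mathbb{E}_{[x]} \mathrm{JSD}\bigl(\{f_\theta(x)\}_{x\in [x]}\bigr)$, which proves part (i).

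For part (ii) with $k=2$, let $[x] = \{x, x^\star\}$ with $x$ in subgroup $z$ and $x^\star$ in subgroup $z'$, and let $\xt_{z'} = F_{z\to z'}(x)$ and $\xt_z = F_{z'\to z}(x^\star)$ be the CycleGAN-generated coupled surrogates. Because $\sqrt{\mathrm{JSD}}$ is a metric, I would apply the triangle inequality twice, inserting $f_\theta(\xt_{z'})$ (and symmetrically $f_\theta(\xt_z)$) as an intermediate:
\begin{equation*}
    \sqrt{\mathrm{JSD}(f_\theta(x), f_\theta(x^\star))} \;\le\; \sqrt{\mathrm{JSD}(f_\theta(x), f_\theta(\xt_{z'}))} + \sqrt{\mathrm{JSD}(f_\theta(\xt_{z'}), f_\theta(x^\star))}.
\end{equation*}
For $k=2$, the first term is exactly $\sqrt{\mathcal{L}_s(x,\xt_{Z_y};\theta)}$, since $\mathcal{L}_s$ reduces to the JSD between predictions on $\xt_z = x$ and $\xt_{z'}$. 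The second term measures the gap between predictions on the CycleGAN surrogate and the true coupled partner; by the definition of $\mathcal{L}_{CG}^{z'}$ as (a bound on) this prediction-level divergence, I would identify it with $\sqrt{\mathcal{L}_{CG}^{z'}(x;\theta)}$. Symmetrizing over the two subgroups produces the $\sum_{z \in Z_y}$ of CycleGAN terms. Squaring both sides and taking expectation over $(x,y)\sim (X,Y)$ (which also averages over the coupled set structure) yields the stated upper bound $\E\bigl(\mathcal{L}_s^{1/2} + \sum_z (\mathcal{L}_{CG}^z)^{1/2}\bigr)^2$.

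Part (iii) is then immediate: when the \methodabbrv\ objective attains its global minimum the right-hand side is zero, so the non-negative quantity $\mathbb{E}_{[X]}\mathrm{JSD}(\{f_\theta(x)\}_{x\in[x]})$ vanishes, which forces the predictive distributions to agree on every coupled set almost surely; combined with part (i) this gives $I(\Yh;Z\mid [X]) = 0$, i.e.\ $\Yh \perp Z \mid [X]$. I expect the main obstacle to be the middle step: carefully formalizing $\mathcal{L}_{CG}^z$ so that $\sqrt{\mathcal{L}_{CG}^z(x;\theta)}$ legitimately upper-bounds $\sqrt{\mathrm{JSD}(f_\theta(\xt_z), f_\theta(x_z^\star))}$. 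This likely requires interpreting the CycleGAN loss as a divergence between the pushforward distributions of $F_{z'\to z}(X^\star)$ and $X_z$ on coupled sets, and invoking the data-processing inequality so that applying $f_\theta$ only contracts the divergence. Everything else is bookkeeping with the triangle inequality and the definition of conditional MI.
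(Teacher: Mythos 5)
Your proposal follows essentially the same route as the paper's proof: part (i) is \cref{lmm:conditional-mi} (proved there via \cref{prop:js-mi} rather than the entropy decomposition, but with identical content), part (ii) rests on the metric property of $JS^{1/2}$ (\cref{prop:js-metric}) together with the data-processing inequality and the discriminator-loss-equals-JSD identity (\cref{lmm:discriminator}), and part (iii) is the same vanishing-at-the-optimum argument. The one place your bookkeeping deviates is the identification of $JS\left(f_\theta(x), f_\theta(\xt_{z'})\right)^{1/2}$ with $\mathcal{L}_s^{1/2}$: for $k=2$ the self-consistency loss \eqref{eq:self-consistency} is the JSD of the predictions on the two \emph{generated} examples $f_\theta(\xt_{z})$ and $f_\theta(\xt_{z'})$, where $\xt_z$ is itself a generator output and need not equal $x$ (it is only encouraged to be close by the identity loss). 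The paper therefore chains the triangle inequality through three hops, $f_\theta([x]_1) \to f_\theta([\xt]_1) \to f_\theta([\xt]_2) \to f_\theta([x]_2)$, so that the middle hop is exactly $\mathcal{L}_s$ and each outer hop is bounded, via $JS(f_\theta([x]_z), f_\theta([\xt]_z)) \le JS([x]_z, [\xt]_z)$ and \cref{lmm:discriminator}, by $\mathcal{L}_{CG}^{z}$; this is where both CycleGAN terms in the stated bound come from, rather than from symmetrizing a two-hop chain. You correctly flag the interpretation of $\mathcal{L}_{CG}^z$ as the delicate step, and the pair-conditioned-discriminator reading you propose is exactly the one the paper adopts in \cref{sec:cg_background}.
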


The main idea is that the conditional MI $I(\Yh; Z \mid [X])$ can be related to model's predictions on all elements in a coupled set $[x]$ using properties of the JSD.
However, since we do not have true coupled sets,
the consistency loss~\eqref{eq:consistency-loss} only minimizes a proxy for this JSD using the augmentations $\xt_{Z_y}$.
Using standard GAN results, the divergence between the true and augmented distributions can be bounded by the loss of a discriminator,
and the result follows from metric properties of the JSD.

Thus, the CycleGAN augmentations (Stage 1) and our consistency regularizer (Stage 2) combine to provide an upper bound on our MI objective,
tying together the model patching framework neatly. %

\section{Experiments}
\label{sec:experiments}

Our goal is to demonstrate that \methodabbrv~can take advantage of the learned subgroup augmentations and consistency regularizer to improve robust 
and aggregate
accuracy, while reducing the subgroup performance gap
(defined in Table~\ref{tab:losses_and_metrics}). We validate \methodabbrv~against both standard training with no subgroup knowledge (ERM) and other baselines aimed at improving group robustness across 4 datasets. We also conduct extensive ablations to isolate the benefit of the learned inter-subgroup transformations over standard augmentation, and the subgroup consistency regularizer over prior consistency losses.

\begin{table*}[t]
\centering
\caption[]{\small A comparison between \methodabbrv\ and other methods on 3 benchmark datasets. Evaluation metrics include robust \& aggregate accuracy and the subgroup performance gap, calculated on the test set. Results are averaged over 3 trials (one standard deviation indicated in parentheses).}
\resizebox{\textwidth}{!}{%
\begin{tabular}{llrrrrrrrr}
\toprule 
 \multirow{ 2}{*}{\bf Dataset}          &  \multirow{ 2}{*}{\bf Method}      & \multicolumn{4}{c}{{\bf Subgroup} $\quad Y$} & \bf{Aggregate} & \bf{Robust} & \multicolumn{2}{c}{ \bf{Subgroup Gap (\%)} }    \\
                      &                   & \multicolumn{4}{c}{{\bf Acc. (\%)} $\quad\hidden{\quad\quad\quad\quad}\, Z$} & \bf{Acc. (\%)} & \bf{Acc. (\%)} & \multicolumn{2}{c}{$Y$} \\
\midrule
& & even & even & odd & odd & & & \multirow{ 2}{*}{even} & \multirow{ 2}{*}{odd} \\ 
& & clean & zigzag & clean & zigzag \\ \cmidrule{3-6}\cmidrule{9-10}
{\bf MNIST-}           & \bf {ERM}            &    86.96 & 73.51 & 71.47 & 75.21 & 76.75 (1.60) & 71.47 (1.50) & 13.45 & 3.73            \\
 
                  {\bf Correlation}       & \bf {IRM}                 &  94.68 & 69.30 & 81.77 & 93.53  & 84.85 (5.42)          & 69.30 (3.29) & 25.38 & 11.76         \\
                      & \bf {CDAT}       &  94.63       &  72.85     & 79.21        &               92.97 & 84.93 (5.84) & 72.85 (3.47) & 21.78 & 13.76                \\
                         & \bf {GDRO}                &  98.10 & 93.31 & 96.82 & 97.15  & 96.35 (0.49)       & 93.31 (1.30)     & 4.79 & 0.79          \\
                      & \bf {\methodabbrv \hidden{~(ours)} }            &     98.85 & 97.89 & 97.98 & 97.87 & {\bf 97.55} (0.46) & {\bf 97.77} (0.42) & {\bf 0.96} & {\bf 0.17}            \\
\midrule
&  & non-blonde & non-blonde & blonde & blonde & & & \multirow{ 2}{*}{non-blonde} & \multirow{ 2}{*}{blonde} \\
&  & female & male & female & male \\ \cmidrule{3-6}\cmidrule{9-10}
{\bf CelebA-}           & \bf {ERM}     &  81.09     & 98.08   &  98.13     & 60.04      & 88.26 (1.88)              & 62.22 (6.83) & 16.99 & 38.09              \\
{\bf Undersampled}                       & \bf{GDRO}        &  89.26 & 92.24  & 94.08  & 82.20  & 90.91 (0.78)           & 82.20 (3.13) & 2.98 & 11.88          \\
                      & \bf {\methodabbrv\hidden{~(ours)}}  &  92.15 & 93.73 & 91.13 & 83.53  & {\bf 92.90} (0.35)    & {\bf 83.90} (1.31)  & {\bf 1.83} & {\bf 8.07}  \\
\midrule
 & & landbird & landbird & waterbird & waterbird & & & \multirow{ 2}{*}{landbird} & \multirow{ 2}{*}{waterbird} \\ 
 & & land & water & land & water \\ \cmidrule{3-6}\cmidrule{9-10}
{\bf Waterbirds}       & \bf {ERM}     &  98.92     & 75.12     & 72.71     & 94.95      & 86.31 (0.39)              & 72.71 (2.36) & 23.80 & 22.24              \\
                      & \bf {GDRO}   &   94.46  & 83.81 & 88.19 & 92.36  & 89.39 (0.19) & 83.81 (0.39) & 10.65 & 4.17         \\
                      & \bf{\methodabbrv\hidden{~(ours)}}  &  90.84 & 90.40 & 89.69  & 89.58  &  {\bf 90.89} (0.87)   & {\bf 89.12} (0.36) & {\bf 0.43} & {\bf 1.04}   \\
\bottomrule
\end{tabular}%
}

\label{tab::results-comparison-with-other-methods}
\end{table*}

\paragraph{Datasets.} We briefly describe the datasets used, with details available in Appendix~\ref{sec:full_dataset_info}.

\vspace{0.2cm}\noindent\textit{MNIST-Correlation.} We mix data from MNIST~\cite{lecun1998gradient} and MNIST-Corrupted~\cite{mu2019mnist} to create a controlled setup for analyzing subgroup performance. Digit parity classes $Y \in \{ {\rm even}, {\rm odd}\}$ are divided into subgroups $Z \in \{{\rm clean}, {\rm zigzag}\}$ from MNIST and MNIST-Corrupted respectively. $Y$ and $Z$ are highly correlated, so that most ${\rm even}\,\, ({\rm odd})$ digits are ${\rm clean}\,\, ({\rm zigzag})$.
 
\vspace{0.2cm}\noindent\textit{CelebA-Undersampled.} Following \cite{sagawa2019distributionally}, we classify hair color $Y \in \{ {\rm non{\text -}blonde}, {\rm blonde}\}$ in the CelebA faces dataset \cite{liu2015deep}. 
Subgroups are based on gender $Z = \{ {\rm female}, {\rm male}\}$. 
We subsample the set of ${\rm non{\text -}blonde\,\,women}$ so that most ${\rm non{\text -}blonde \,\,({\rm blonde})}$ examples are ${\rm men} \,\,({\rm women})$.

\vspace{0.2cm}\noindent\textit{Waterbirds.} In this dataset to analyze spurious correlations \cite{sagawa2019distributionally}, birds $Y \in \{{\rm landbird},{\rm waterbird}\}$ are placed against image backgrounds $Z \in \{ {\rm land},{\rm water}\}$, with ${\rm waterbirds}\,\, ({\rm landbirds})$ more commonly appearing against ${\rm water}\,\, ({\rm land})$.

\vspace{0.2cm}\noindent\textit{ISIC.} ISIC (International Skin Imaging Collaboration) is a skin cancer dataset~\cite{codella2018skin}. 
We classify $Y \in \{{\rm benign}, {\rm malignant}\}$ cancers, with bandages $Z$ appearing on $\sim 50 \%$ of only ${\rm benign}$ images.

\paragraph{Methods.}
\methodabbrv\ instantiates model patching as described in Section~\ref{sec:method}.
We use the original CycleGAN model with default hyperparameters (Appendix~\ref{sec:cyclegan_training_details}).
We compare against {ERM} and {GDRO}~\cite{sagawa2019distributionally} (Table~\ref{tab:losses_and_metrics}), which respectively minimize the standard risk
and robust risk
(over all subgroups) on the training set. 
On MNIST-Correlation, we additionally compare against the {IRM}~\cite{arjovsky2019invariant} and {CDAT}~\cite{li2018deep} baselines which target invariance assumptions (details in Appendix~\ref{sec:baselines}).
All classifiers are fine-tuned using a ResNet-50 architecture, with pretrained ImageNet weights. Detailed information about experimental setups and hyperparameters are provided in Appendix~\ref{sec:experimental_details}.

\subsection{Subgroup Robustness and Invariance on Benchmark Datasets}
\label{sec:benchmarks}
We first compare all methods on the benchmark datasets, with results summarized in Table~\ref{tab::results-comparison-with-other-methods}.%

\paragraph{\methodabbrv~increases aggregate and robust accuracy while closing the subgroup gap.}  %
On all datasets, \methodabbrv~improves both aggregate and robust accuracy by up to ${5.3\%}$,
mitigating the tradeoff that other methods experience. %
 \methodabbrv\ also balances out the performance of subgroups within each class, e.g., on Waterbirds, reducing this subgroup gap by $10.22\%$ on ${\rm landbirds}$ compared to GDRO.

\begin{wraptable}{r}{0.45\linewidth}%
\caption{\small Estimated MI between predictions and subgroups computed on MNIST-Correlation (lower is better).}
\resizebox{\linewidth}{!}{%
\begin{tabular}{@{}llllll@{}}
  \toprule
                        & \textbf{ERM} & \textbf{IRM} & \textbf{CDAT} & \textbf{GDRO} & \textbf{\methodabbrv} \\
                        \midrule
\textbf{MI Estimate} & 0.67         & 0.69         & 0.69          & 0.33          & \textbf{0.02}  \\
\bottomrule
\end{tabular}%
}

\label{tab::mutual_info}
\end{wraptable}

\paragraph{\methodabbrv~learns subgroup-invariant representations.}
To measure the invariance of models, we report an estimate of the mutual information defined in Lemma~\ref{lmm:mi-adversarial},
calculated using class-conditional domain prediction heads (Appendix~\ref{sec:mi_measurement}).
Table~\ref{tab::mutual_info} illustrates that \methodabbrv~is the only method that successfully makes the model invariant to subgroups in the dataset.%

\subsection{Model Patching Ablations}
We perform ablations on the major components of our framework:
(1) substituting learned augmentations with alternatives like heuristic augmentations in Stage 1, and 
(2) substituting prior consistency losses for our subgroup consistency regularizer in Stage 2.

\subsubsection{Effect of Learned Augmentations}
\label{sec:stage_1_ablation}

\begin{table}%
\centering
\caption[]{\small Ablation analysis (Section \ref{sec:stage_1_ablation}) that varies the consistency penalty coefficient $\lambda$. For brevity, we report the maximum subgroup performance gap over all classes.
}
\resizebox{0.5\linewidth}{!}{%
\begin{tabular}{@{}llll@{}}
\toprule
\multirow{3}{*}{\textbf{Method}}                                                           & \multicolumn{3}{c}{\textbf{Robust Acc. (\%)}}                     \\
                                                                                          & \multicolumn{3}{c}{{\textbf{Max Subgroup Gap}}}            \\ \cmidrule(l){2-4} 
                                                                                          & $\lambda = 20$         & $\lambda = 50$         & $\lambda = 200$ \\ \midrule
\multirow{2}{*}{\textbf{\begin{tabular}[c]{@{}l@{}}Subgroup\\ Pairing\end{tabular}}}       & 74.22                  & 71.88                  & 74.22           \\
                                                                                          & {19.53}         & {23.43}         & {23.06}  \\ \midrule
\multirow{2}{*}{\textbf{\begin{tabular}[c]{@{}l@{}}Heuristic\\ Augmentation\end{tabular}}} & 87.50                  & 88.54                  & 79.17           \\
                                                                                          & {6.95}          & {6.48}          & {37.50}  \\ \midrule
\multirow{2}{*}{\textbf{\methodabbrv}}                                                            & 82.03                  & 83.33                  & \textbf{89.06}  \\
                                                                                          & {12.50}         & {10.84}         & {3.13}   \\  \midrule\midrule
\multirow{2}{*}{\textbf{\methodabbrv\ + Heuristic}}                                                & 89.06                  & \textbf{90.62}         & 53.45           \\
                                                                                           &   {{0.21}} & {{1.30}} & {19.39}  \\ \bottomrule
\end{tabular}%
}

\label{tab::results-consistency-loss}
\end{table}

We investigate the interaction between the type of augmentation used and the strength of consistency regularization, by varying the consistency loss coefficient $\lambda$ on Waterbirds (Table~\ref{tab::results-consistency-loss}). 
We compare to: (i) {\it subgroup pairing}, where consistency is directly enforced on subgroup examples from a class without augmentation and (ii) {\it heuristic augmentations}, where the CycleGAN is substituted with a state-of-the-art heuristic augmentation pipeline~\cite{hendrycks2019augmix} (Appendix~\ref{sec:baselines}) containing rotations, flips, cutout etc. 
Our goal is to validate our theoretical analysis, which suggests that strong consistency training should help most when used with the coupled examples generated by the CycleGAN. We expect that the ablations should benefit less from consistency training since, (i) subgroup pairing enforces consistency on examples across subgroups that may not lie in the same coupled set; and (ii) heuristic augmentations may not change subgroup membership at all, and may  even change class membership.

\paragraph{Strong consistency regularization enables \methodabbrv's success.}
As $\lambda$ increases from $20$ to $200$, \methodabbrv's robust accuracy rises by $7\%$ while the subgroup gap is $9.37\%$ lower. 
For both ablations, performance {deteriorates} when $\lambda$ is large.
Subgroup pairing is substantially worse ($14.84\%$ lower) since it does not use any augmentation, and as we expected does not benefit from increasing $\lambda$.
Heuristic augmentations (e.g. rotations, flips) are not targeted at subgroups and can distort class information (e.g. color shifts in AugMix), and we observe that strongly enforcing consistency ($\lambda = 200$) makes these models much worse.
Overall, these results agree with our theoretical analysis.

\paragraph{\methodabbrv\ combines flexibly with other augmentations.} Empirically, we observe that performing heuristic augmentations in addition to the CycleGAN (\methodabbrv{} + Heuristic) can actually be beneficial, with a robust accuracy of $90.62\%$ and a subgroup gap that is $1.83\%$ lower than using \methodabbrv\ alone at their best $\lambda$. %

\subsubsection{Analyzing the Subgroup Consistency Regularizer}
\label{subsection:analyzing_lambda}
Next, we investigate our choice of consistency regularizer, by substituting it for (i) a triplet Jensen-Shannon loss~\cite{hendrycks2019augmix} and (ii) a KL-divergence loss~\cite{xie2019unsupervised} in \methodabbrv\ (Figure~\ref{fig:consistency_loss_ablations}). Our goal is to demonstrate that our theoretically justified regularizer reduces overfitting, and better enforces subgroup invariance.%
 
\paragraph{Consistency regularization reduces overfitting.} Figure~\ref{fig:consistency_loss_ablations} illustrates the train and validation cross-entropy loss curves for \methodabbrv\ and GDRO on the small $({\rm landbird}, {\rm water})$ Waterbirds subgroup (184 examples). Consistency regularization shrinks the gap between train and validation losses, strongly reducing overfitting compared to GDRO.

\paragraph{Alternative consistency losses deteriorate performance.} %
As expected, substituting the subgroup consistency loss with either the triplet-JS loss or the KL loss in \methodabbrv\ reduces robust accuracy significantly ($-2.5\%$ on Waterbirds). Interestingly, our subgroup consistency regularizer improves over prior consistency losses even when used with heuristic augmentations. %

\subsubsection{Additional GAN Ablations}
Several GAN works highlighted in Appendix~\ref{sec:related-work-full} have been used for data augmentation.
However, they have focused on metrics such as image quality and aggregate accuracy, as opposed to robust accuracy.
In Appendix~\ref{sec:gan_baselines}, we consider three other GAN baselines in addition to CycleGAN, either by themselves as a pure augmentation method, or integrated in the model patching pipeline.
Model patching consistently improves the robust performance of each base model.

\subsection{Real-World Application in Skin Cancer Classification}

\begin{figure}
\begin{subfigure}{0.4\linewidth}
    \centering
    \includegraphics[width=\linewidth]{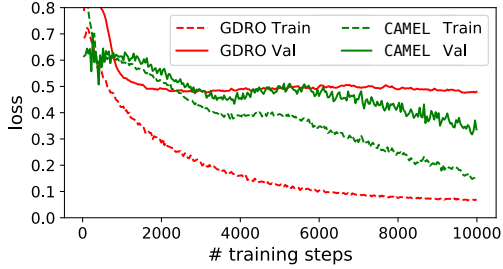}
    \label{fig:train-val}
\end{subfigure}\hfill
\begin{subtable}{0.55\linewidth}
\vspace*{-1.5em}
\centering
\small
\resizebox{\linewidth}{!}{%
\begin{tabular}{lllll}
    \toprule
                                                  & \multicolumn{2}{c}{{\bf Learned Aug.}}  & \multicolumn{2}{c}{{\bf Heuristic Aug.}} \\
    \cmidrule{2-5} %
                                                  & {\bf Triplet JS}                        & {\bf KL}                                  & {\bf Triplet JS}                 & {\bf KL} \\
    \midrule
    \begin{tabular}[c]{@{}l@{}}{\bf Performance} \\ {\bf Change}                           \\ (vs. \methodabbrv{}                     \\ Consistency Loss) \end{tabular} & -2.50     & -0.83 & -2.08 & -1.04 \\
    \bottomrule
\end{tabular}
}
\label{tab:alternative-consistency-loss}
\end{subtable}

\caption{\small \textbf{Consistency loss ablations on Waterbirds}. (Left) loss curves on the $({\rm landbird}, \,{\rm water})$ subgroup. The addition of the \methodabbrv\ consistency loss to GDRO reduces overfitting. (Right) Robust accuracy decrease with alternate consistency losses (Triplet JS~\cite{hendrycks2019augmix} and KL~\cite{xie2019unsupervised}) on \methodabbrv-generated data or heuristic augmentations.}
\label{fig:consistency_loss_ablations}
\end{figure}

We conclude by demonstrating that \methodabbrv\ can improve performance substantially on the real-world ISIC~\cite{codella2018skin} skin cancer dataset (Table~\ref{tab:isic}). We augment only the benign class, which is split into subgroups due to the presence of a colored bandage (Figure~\ref{fig:isic}) while the malignant class contains no subgroups. We also additionally report AUROC, as is conventional in medical applications. %

\begin{wraptable}{r}{0.4\linewidth}
\centering
\caption{\small Comparison on ISIC.}
\resizebox{0.9\linewidth}{!}{%
\begin{tabular}{llll}
\toprule
\multirow{2}{*}{\bf Method} &\multicolumn{2}{c}{ Evaluation Metric}   \\
& \textbf{Robust Acc.} & \textbf{AUROC}\\
 \midrule
  ERM & 65.59 (1.17) & {\bf 92.48} (0.80) \\   
  GDRO & 64.97 (3.15) & 89.50 (2.50) \\   
  \methodabbrv & {\bf 77.45} (0.35)  & {\bf 92.47} (0.38) \\ 
 \bottomrule
\end{tabular}%
}
\label{tab:isic}
\end{wraptable}
 \methodabbrv\ substantially improves robust accuracy by $11.7\%$ and importantly, increases accuracy on the critical malignant cancer class from $65.59\%$ (ERM) and $64.97\%$ (GDRO) to $78.86\%$ (Appendix~\ref{sec:isic_spurious}). %
While standard ERM models spuriously correlate the presence of the colored bandage with the benign class, \methodabbrv~reduces the model's dependence on spurious features. We verify this by constructing a modified ISIC subgroup (Appendix~\ref{sec:isic_spurious}) for the malignant class that also contains bandages.
Figure~\ref{fig:isic} illustrates using GradCAM~\cite{selvaraju2017grad} that \methodabbrv\ removes the model's reliance on the spurious bandage feature, shifting attention to the skin lesion instead.%

\section{Conclusion}
Domain experts face a common problem: how can classifiers that exhibit unequal performance on different subgroups of data be fixed? To address this, we introduced model patching, a new framework that improves a classifier's subgroup robustness by encouraging subgroup-feature invariance. 
Theoretical analysis and empirical validation suggest that model patching can be a useful tool for domain experts in the future. 

\clearpage
\label{submission}

\nocite{he2016identity,cubuk2019autoaugment,ho2019population,lim2019fast}
\nocite{ratner2017learning,devries2017improved,yun2019cutmix,berthelot2019mixmatch,zhang2017mixup,upchurch2017deep,isola2017image,zhu2017unpaired,almahairi2018augmented,choi2018stargan,antoniou2017data,gowal2019achieving,heinze2017conditional,kannan2018adversarial,zheng2016improving}

\section*{Broader Impact}
Model patching addresses an important problem faced by domain experts: the unexpected failure of standard classifiers on subgroups of a class. This failure can have important consequences in real applications such as inducing discrimination and bias toward certain subgroups or populations. As an illustrative example, consider that skin cancer image classification datasets overwhelmingly contain images of light-skinned individuals~\cite{10.1001/jamadermatol.2018.2348}, suggesting that performance on underrepresented subgroups corresponding to darker skin tones may suffer when a model trained on these datasets is deployed. Through this work and by releasing our code, we hope to both provide more clarity on the methodological question of how to make such models better, as well as giving domain experts a new tool that takes an encouraging step in this direction. While we do not anticipate any negative consequences to our work, we hope to continue to improve and build on model patching in future work.  %

\section*{Acknowledgments and Disclosure of Funding}

We thank Pang Wei Koh, Shiori Sagawa, Geoff Angus, Jared Dunnmon, and Nimit Sohoni for assistance with baselines and datasets and useful discussions.
We thank members of the Hazy Research group including Mayee Chen, Megan Leszczynski, Sarah Hooper, Laurel Orr, and Sen Wu for useful feedback on previous drafts.
KG and AG are grateful for Sofi Tukker's assistance throughout this project.
We gratefully acknowledge the support of DARPA under Nos. FA86501827865 (SDH) and FA86501827882 (ASED); NIH under No. U54EB020405 (Mobilize), NSF under Nos. CCF1763315 (Beyond Sparsity), CCF1563078 (Volume to Velocity), and 1937301 (RTML); ONR under No. N000141712266 (Unifying Weak Supervision); the Moore Foundation, NXP, Xilinx, LETI-CEA, Intel, IBM, Microsoft, NEC, Toshiba, TSMC, ARM, Hitachi, BASF, Accenture, Ericsson, Qualcomm, Analog Devices, the Okawa Foundation, American Family Insurance, Google Cloud, Swiss Re, the Salesforce Deep Learning Research grant, the HAI-AWS Cloud Credits for Research program, and members of the Stanford DAWN project: Teradata, Facebook, Google, Ant Financial, NEC, VMWare, and Infosys. The U.S. Government is authorized to reproduce and distribute reprints for Governmental purposes notwithstanding any copyright notation thereon. Any opinions, findings, and conclusions or recommendations expressed in this material are those of the authors and do not necessarily reflect the views, policies, or endorsements, either expressed or implied, of DARPA, NIH, ONR, or the U.S. Government.

\small

\bibliographystyle{abbrv}
\bibliography{biblio}

\clearpage

\appendix

\section{Glossary of Notation}
\label{sec:glossary}

We provide a glossary of notation used throughout the paper.

\begin{table}[ht]

\centering
    \caption{\small Summary of notation used throughout this work.}
    \resizebox{\textwidth}{!}{%
    \begin{tabular}{@{}lll@{}}
      \toprule
      & Notation                                            & Description                                                                                           \\
      \midrule
      \textbf{Preliminaries} & $x, y, z$                                           & Example, class, subgroup                                                                              \\
      & $X, Y, Z$                                           & Random variables for examples, classes, and subgroups                                                 \\
      & $P$                                                 & The joint distribution for $X, Y, Z$                                                                  \\
      & $P_y, P_z$                                          & The distribution for $X$ conditioned on class $y$ or subgroup $z$                                     \\
      & $\mathcal{X}, \mathcal{Y}, \mathcal{Z}$             & Domains for $X, Y, Z$                                                                                 \\
      & $Z_y \subset \mathcal{Z}$                           & The subgroups belonging to class $y$                                                                  \\
      & $Y_z \in \mathcal{Z}$                                & The class of a subgroup $z$                                                                           \\
      & $f_\theta : \mathcal{X} \to \Delta^{|\mathcal{Y}|}$ & The parameterized class prediction model, returning a categorical distribution over $\mathcal{Y}$     \\
      & $\hat{Y}$                                           & A random variable with support $\mathcal{Y}$ indicating a random sample from the output of $f_\theta$ \\
      \midrule
      \textbf{Coupled sets} & $[x]$                                               & A coupled set                                                                                                     \\
      \textbf{and augmentations} & $[X]$                                               & Random variable for coupled sets                                            \\
      & $[x]_z$                                             & Example belonging to subgroup $z$ in the coupled set $[x]$                                            \\
      & $x_{Z_y}$                                           & The coupled set (Definition~\ref{def:coupled-sets}) of examples in $x$'s class $y$. Same as $[x]$. \\
      & $[\tilde{x}]$                                       & An augmented coupled set                                                                               \\
      & $[\tilde{x}]_z, [x]_{\tilde{z}}$                    & Example belonging to subgroup $z$ in the augmented coupled set $[\tilde{x}]$                           \\
    
      & $\tilde{x}_{Z_y}$                                   &  The augmented coupled set of examples in $\tilde{x}$'s class $y$. Same as $[\tilde{x}]$.                                                                                                     \\
      & $k$                                                 & Number of subgroups in any (generic) class   \\
      \midrule
      \textbf{Model components} & $\mathcal{L}_{CG}$        & Sum of CycleGAN consistency and identity losses                                                                                                      \\
      \textbf{and losses} & $\mathcal{L}_{s}$                                   & Self-consistency loss (Eq~\ref{eq:self-consistency})                                                                                                       \\
      & $\mathcal{L}_{t}$                                   & Translation-consistency loss (Eq~\ref{eq:translation-consistency})                                                                                                       \\
      & $\mathcal{L}_{c}$                                   & Total consistency loss (Eq~\ref{eq:consistency-loss})                                                                                                      \\
      & $L : \mathcal{X}^2 \to \R$                          & A distance function, used for CycleGAN consistency losses                                             \\
      & $\lambda$                                           & Hyperparameter controlling the strength of the consistency loss                                                                                                       \\
      & $KL(\cdot)$                                         & The KL divergence                                                                                     \\
      & $JS(\cdot)$                                         & The Jensen-Shannon divergence (Definition~\ref{def:js})                                               \\
      & $I(\cdot)$                                          & The Mutual Information                                                                                \\
      \bottomrule
    \end{tabular}%
    }
    \label{tab:glossary}
\end{table}

\section{Extended Related Work}
\label{sec:related-work-full}

We provide a comprehensive overview of related work and highlight connections to our work below.

\subsection{Overview of Data Augmentation}
Data augmentation is widely used for improving the {aggregate} performance of machine learning models in computer vision~\cite{Krizhevsky2012ImageNetCW,Szegedy2014GoingDW}, natural language processing~\cite{Kolomiyets2011ModelPortabilityEF,Zhang2015CharacterlevelCN,Sennrich2015ImprovingNM} and audio~\cite{Ko2015AudioAF,Cui2015DataAF}. The theoretical motivation for data augmentation is largely based on the {tangent propagation} formalism~\cite{Simard1991TangentP,Simard1992EfficientPR,Simard1998TransformationII,Dao2018AKT} which expresses the desired invariances induced by a data augmentation as tangent constraints on the directional derivatives of the learned model. %

 Early work considered augmentations as image defects~\cite{baird1992document} or stroke warping~\cite{Yaeger1996EffectiveTO} for character recognition. Since then, augmentation is considered an essential ingredient in computer vision~\cite{lecun1998gradient,Simard2003BestPF}, with commonly used augmentations including random
flips, rotations and crops~\cite{Krizhevsky2012ImageNetCW,Szegedy2014GoingDW,he2016identity}. Applications of augmentation in computer vision include object detection~\cite{dwibedi2017cut,Zoph2019LearningDA} and scene understanding~\cite{Dvornik2018OnTI}%

In natural language processing, common data augmentation techniques include back-translation~\cite{Sennrich2015ImprovingNM,Yu2018QANetCL}, synonym or word substitution~\cite{Kolomiyets2011ModelPortabilityEF,Zhang2015CharacterlevelCN,Wang2015ThatsSA,Fadaee2017DataAF,Kobayashi2018ContextualAD}, noising~\cite{Xie2017DataNA}, grammar induction~\cite{Jia2016DataRF}, text editing~\cite{Wei2019EDAED} and other heuristics~\cite{Deschacht2009SemisupervisedSR,Silfverberg2017DataAF}. In speech and audio applications, augmentation is also commonly used, through techniques such as vocal tract length warping~\cite{Jaitly2013VocalTL,Ko2015AudioAF} and stochastic feature mapping~\cite{Stylianou1998ContinuousPT,Cui2015DataAF}.

In this work, we perform an empirical evaluation on image classification tasks although our ideas can be extended to classification of other modalities such as speech and text.

\subsection{Augmentation Primitives and Pipelines}
Next, we highlight the particular augmentation primitives that have been used in prior work. Our work is differentiated by the use of {learned} augmentation primitives using CycleGANs~\cite{zhu2017unpaired}, as well as a theoretical justification for this choice.%

\paragraph{Hand-Crafted Augmentation Primitives.} Commonly used primitives are typically heuristic transformations, such as rotations, flips or crops~\cite{Krizhevsky2012ImageNetCW,Szegedy2014GoingDW}. Recent work has hand-crafted more sophisticated primitives, such as Cutout~\cite{devries2017improved}, Mixup~\cite{zhang2017mixup}, CutMix~\cite{yun2019cutmix} and MixMatch~\cite{berthelot2019mixmatch}. While these primitives have culminated in compelling performance gains~\cite{cubuk2019autoaugment,cubuk2019randaugment}, they produce unnatural images and distort image semantics.

\paragraph{Assembling Augmentation Pipelines.} Recent work has explored learning {augmentation policies} -- the right subset of augmentation primitives, and the order in which they should be applied. The learning algorithm used can be reinforcement learning~\cite{ratner2017learning, cubuk2019autoaugment} or random sampling~\cite{cubuk2019randaugment}. More computationally efficient algorithms for learning augmentation policies have also been proposed~\cite{ho2019population,lim2019fast}. 

These pipelines are primarily derived from the fixed set of generic image transformations we discussed earlier, and do not directly target specific attributes. By contrast, we consider {learning} augmentation primitives that target subgroup robustness, and additionally demonstrate in Section~\ref{subsection:analyzing_lambda} that heuristic augmentations can complement \methodabbrv~to yield additional performance gains.

\paragraph{\bf Learned Augmentation Primitives.} There is substantial prior work in learning image transformations that produce {semantic}, rather than superficial changes to an image. A common paradigm is to learn a semantically meaningful data representation, and manipulate embeddings in this representation to produce a desired transformation. Transformations can then be expressed as vector operations over embeddings~\cite{Reed2015DeepVA,upchurch2017deep} or manifold traversals~\cite{Reed2014LearningTD,Gardner2015DeepMT}. Alternative approaches rely on training conditional generative models~\cite{Brock2016NeuralPE,isola2017image,zhu2017unpaired,almahairi2018augmented,choi2018stargan} that learn a mapping between two or more image distributions. Much of this prior work is motivated by the need for sophisticated tools for image editing~\cite{upchurch2017deep,Karras2018ASG} \emph{e.g.} for creative applications of machine learning~\cite{Mazzone2019ArtCA}. 

Closer to our setting is work that explores the use of these transformations for data augmentation. A prominent use case focuses on imbalanced datasets, where learned augmentations are used to generate examples for underrepresented classes or domains. Examples include BaGAN~\cite{mariani2018bagan}, DAGAN~\cite{antoniou2017data}, TransferringGAN~\cite{wang2018transferring} and others~\cite{Tran2017ABD,Molano2018GenerativeMF,Zhang2018DADADA,Mounsaveng2019AdversarialLO,Hu2019LearningDM,Beery2019SyntheticEI}. Applications to medical data~\cite{Sandfort2019DataAU,Pesteie2019AdaptiveAO} and person re-identification~\cite{Sun2019UnlabeledSG} have also been explored. 

Our model patching framework differs substantially from these papers, since we focus on \emph{robustness}. We discuss this intersection next. 

\subsection{Data Augmentation and Model Robustness}
Prior work on model robustness has mostly focused on learning models that are robust to bounded $\ell_p$-norm perturbations~\cite{Szegedy2013IntriguingPO,Goodfellow2014ExplainingAH,Papernot2015DistillationAA,MoosaviDezfooli2018RobustnessVC} using ideas such as adversarial training~\cite{Madry2017TowardsDL}. A separate line of work considers consistency training~\cite{zheng2016improving,kannan2018adversarial,hendrycks2019augmix}, where predictions are made invariant to input perturbations, often by minimizing a divergence between the predictions for the original and perturbed examples. Consistency regularization has also been shown to be effective for semi-supervised learning~\cite{xie2019unsupervised}.  

\paragraph{\bf Consistency training.}
We contrast equation~\eqref{eq:consistency-loss} with consistency losses from prior work.
Unsupervised Data Augmentation (UDA)~\cite{xie2019unsupervised} simply controls an asymmetric divergence between the original example and each augmented example individually $\sum_z {\rm KL}(f(x) \| f(\xt_z))$.
AugMix~\cite{hendrycks2019augmix} uses a Jensen-Shannon divergence
\[
\frac{1}{k+1} \left[{\rm KL}\left(f(x) \| \mt \right) + \sum_{z \in Z_y} {\rm KL}\left( f(\xt_z) \| \mt \right) \right]
\]
where $\mt = \frac{1}{k+1}\left[f(x) + \sum_i f(\xt_i) \right]$.
This can be seen as a version of our consistency, but with different weights and a different mean distribution that the KL's are being computed against. Our loss \eqref{eq:consistency-loss} has an important asymmetry between the original example $x$ and the augmentations $\xt_i$.
One reason to prefer it is simply noting that as the number $k$ of subgroups grows, the AugMix loss tends to the second term,
and does not control for the discrepancy between predictions on the original domain $f(x)$ and the augmented ones $f(\tilde x_i)$. 
Our consistency regularization instead allows us to bound a mutual information objective between variables in the joint subgroup distribution, yielding a tractable and interpretable objective (Section~\ref{sec:invariance-analysis}).
In addition, we compare with these consistency losses and provide empirical results in Section~\ref{subsection:analyzing_lambda}. 

Robustness to more general augmentations has also been explored~\cite{Odena2016ConditionalIS,Engstrom2017ARA,Kanbak2017GeometricRO,Baluja2017AdversarialTN,Song2018ConstructingUA,Xiao2018GeneratingAE,Qiu2019SemanticAdvGA}, but there is limited work on making models more robust to semantic data augmentations. The only work we are aware of is  AdvMix~\cite{gowal2019achieving}, which combines a disentangled generative model with adversarial training to improve robustness. 

Our work contributes to this area by introducing the model patching framework to improve robustness in a targeted fashion. Specifically, under the data-generating model that we introduce, augmentation with a CycleGAN~\cite{zhu2017unpaired} model allows us to learn predictors that are invariant to subgroup identity.

\subsection{Learning Robust Predictors}
Recent work~\cite{sagawa2019distributionally} introduced GDRO, a distributionally robust optimization method to improve worst-case accuracy among a set of pre-defined subgroups. However, optimizing the GDRO objective does not necessarily prevent a model from learning subgroup-specific features.
Instead, strong modeling assumptions on the learned features may be required,
\emph{e.g.} Invariant Risk Minimization~\cite{arjovsky2019invariant} attempts to learn an invariant predictor through a different regularization term. However, these assumptions are only appropriate for specialized setups where extreme out-of-domain generalization is desired. Unfortunately, these approaches still suffer from standard learning and generalization issues stemming from a small number of examples in the underperforming subgroup(s) -- even with perfect subgroup information. Additionally, they necessarily trade off average (aggregate) accuracy against a different robust metric.

\section{Detailed Analysis}
\label{sec:invariance-analysis-detailed}

We begin with background material on the CycleGAN (\cref{sec:cg_background}) and the Jensen-Shannon Divergence (\cref{sec:js}).
\cref{sec:distribution} contains a longer discussion of the modeling assumptions in \cref{sec:invariance-analysis},
fleshing out the distributional assumptions and definition of coupled sets.
\cref{sec:mi-adversarial} and \cref{sec:mi-consistency} completes the proofs of the results in \cref{sec:invariance-analysis}.

\subsection{Background: CycleGAN}
\label{sec:cg_background}

Given two groups $A$ and $B$, CycleGAN learns mappings $F : B \to A$ and $G : A \to B$ given unpaired samples $a \sim P_{A}, b \sim P_{B}$.
Along with these generators, it has adversarial discriminators $D_A, D_B$ trained with the standard GAN objective,
i.e.\ $D_A$ distinguishes samples $a \sim P_{A}$ from generated samples $F(b)$, where $b \sim P_{B}$. In \methodabbrv, $A$ and $B$ correspond to data from a pair of subgroups $z, z'$ of a class.

CycleGAN uses a \emph{cycle consistency loss} to ensure that the mappings $F$ and $G$ are nearly inverses of each other,
which biases the model toward learning meaningful cross-domain mappings.
An additional \emph{identity loss} is sometimes used which also encourages the maps $F, G$ to preserve their original domains \emph{i.e.} $F(a) \approx a$ for $a \sim P_A$.
These cycle consistency and identity losses can be modeled by respectively minimizing $\mathcal{L}_{CG}(a, F(G(a)))$ and $\mathcal{L}_{CG}(a, F(a))$
for some function $\mathcal{L}_{CG}$ which measures some notion of distance on $A$ (with analogous losses for $B$). Figure~\ref{fig:cyclegan} visualizes the CycleGAN model.

\begin{wrapfigure}{R}{0.45\textwidth}%
    \centering
    \includegraphics[width=0.8\linewidth]{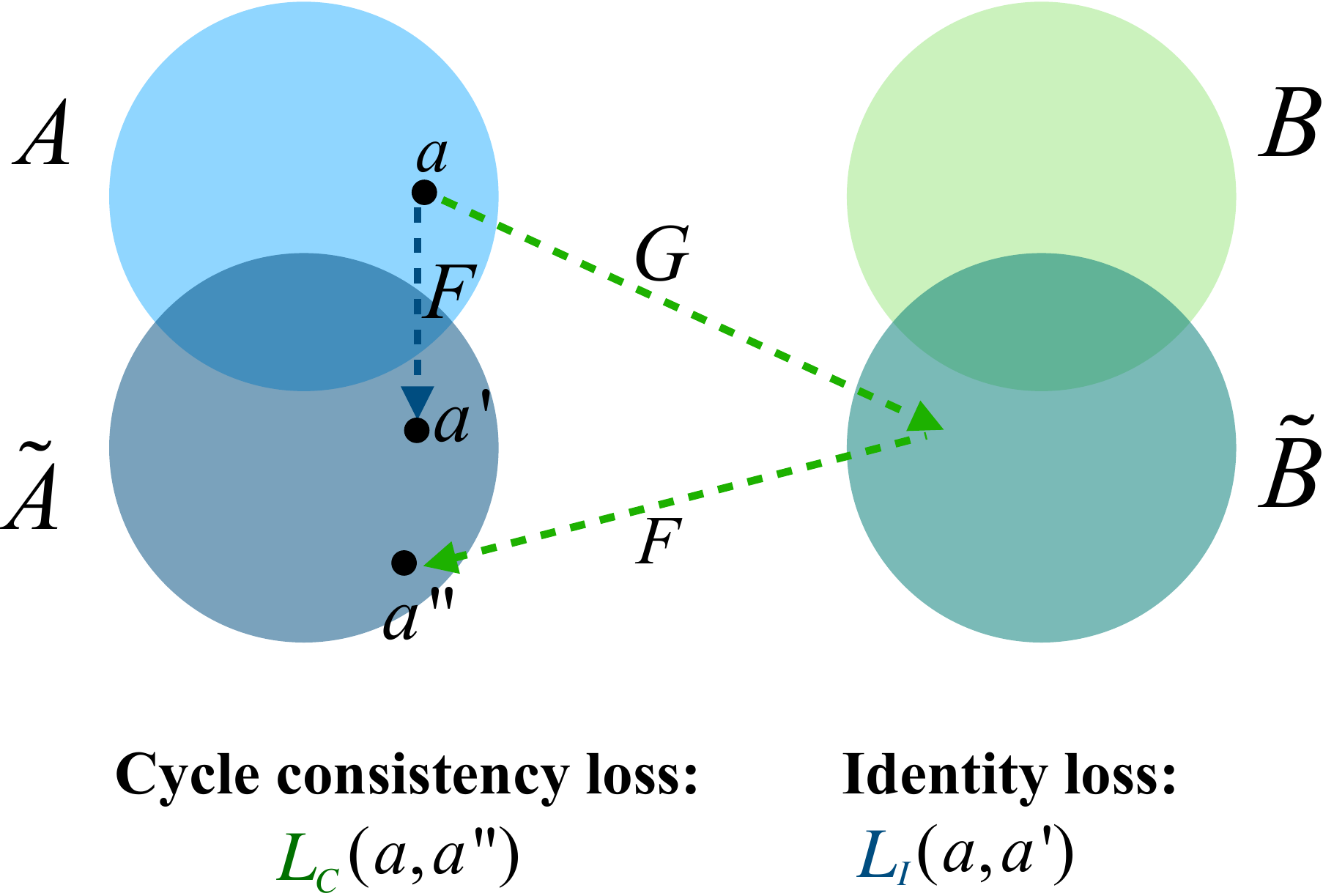}
    \caption{\small CycleGAN learns mappings on domains $A \cup B$, where $F$ maps examples to $A$ and $G$ maps to $B$. To model possible distribution shift introduced by the generative model, we denote their images as ${\rm Im}(F) = \tA, {\rm Im}(G) = \tB$ respectively. Semantically consistent mappings are encouraged with the cycle consistency and identity losses, e.g. to ensure that $F(a) = a$ for all $a \in A$.}
    \label{fig:cyclegan}
\end{wrapfigure}

\begin{defn}
\label{def:cyclegan-loss}
The sum of the CycleGAN cycle consistency $\mathcal{L}_{CG}(a, F(G(a))$ and identity $\mathcal{L}_{CG}(a, F(a))$ losses on domain $A$ is denoted
$\mathcal{L}_{CG}^A(a; \theta)$ for overall CycleGAN parameters $\theta$, and similarly for domain $B$.
In the context of Stage 1 of model patching, let
$\mathcal{L}_{CG}^z(x; \theta)$ denote the loss when the domain is one of the subgroups $z$.
\end{defn}

The original CycleGAN uses the $\ell_1$ distance $L(a, \at) = \|a - \at\|_1$.
However, we note that many other functions can be used to enforce similarity.
In particular, we point out that a \emph{pair-conditioned discriminator} $\mathcal{D}\{a, \at\} \mapsto [0,1]^2$ can also be used,
which accepts a coupled pair of original and translated examples
and assigns a probability to each of being the original example.
If the guesses for the true and translated examples are $\mathcal{D}_a$ and $\mathcal{D}_{\tilde{a}}$ respectively,
then the distance is
$
L(a, \tilde{a}) = \max_{\mathcal{D}} \log \mathcal{D}_a + \log(1 - \mathcal{D}_{\tilde{a}}) + \log 2.
$
To sanity check that this has properties of a distance,
note that $L$ decreases as $a, \tilde{a}$ are more similar, as the discriminator has trouble telling them apart.

Intuitively, the discriminator loss is a measure of how similar the original and generated distributions are,
which will be used in Section~\ref{sec:mi-consistency} to prove our main result.

\subsection{Background: Properties of the Jensen-Shannon Divergence}
\label{sec:js}

We define the Jensen-Shannon divergence (JSD) and its properties that will be used in our method and analysis.

\begin{defn}
  \label{def:js}
  The Jensen-Shannon Divergence (JSD) of distributions $P_1, \dots, P_k$ is $JS(P_1, \dots, P_k) = \frac{1}{k} \sum_{i=1}^k {\rm KL}(P_i \| M)$ where $M = \frac{1}{k} \sum_{i=1}^k P_i$.

  We overload the $JS(\cdot)$ function in the following ways.
  The JSD of random variables $X_1, \dots, X_k$ is the JSD of their laws (distributions).

  Additionally, we define the JSD of vector-valued inputs if they represent distributions from context.
  For example, for a model $f$ that outputs a vector representing a categorical distribution,
  $JS(f_\theta(x_1), \dots, f_\theta(x_k))$ is the JSD of those distributions.
\end{defn}

We briefly review important properties of the JSD. 
Unlike the KL divergence and other notions of distributional distance, the JSD can be related to a metric.
\begin{proposition}
  \label{prop:js-metric}
  The JSD is the square of a metric. In particular, any three distributions $p, q, r$ satisfy
  $JS(p, q)^{1/2} + JS(q, r)^{1/2} \ge JS(p, r)^{1/2}$.
\end{proposition}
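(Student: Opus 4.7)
The plan is to verify that $D(p, q) := \sqrt{JS(p, q)}$ satisfies the four metric axioms on the probability simplex, where by Definition~\ref{def:js} we have $JS(p, q) = \frac{1}{2}{\rm KL}(p \| m) + \frac{1}{2}{\rm KL}(q \| m)$ with $m = \frac{p+q}{2}$. First I would dispatch the easy axioms: nonnegativity, symmetry, and identity of indiscernibles all follow immediately from basic properties of the KL divergence applied to the two summands. Both KL terms are nonnegative and vanish simultaneously if and only if $p = m = q$ (by Gibbs' inequality), and swapping $p, q$ manifestly swaps the two summands.

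The substantive claim is the triangle inequality $\sqrt{JS(p,r)} \le \sqrt{JS(p,q)} + \sqrt{JS(q,r)}$, which is the classical result of Endres and Schindelin. The strategy I would take is to exhibit an isometric embedding $\Phi$ of the simplex into a Hilbert space $\mathcal{H}$ such that $\|\Phi(p) - \Phi(q)\|_{\mathcal{H}}^2 = JS(p, q)$. Once such an embedding is in hand, the triangle inequality for $\sqrt{JS}$ reduces instantly to the Hilbert-space triangle inequality applied to $\Phi(p), \Phi(q), \Phi(r)$, making the desired square-root-of-metric property automatic.

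To build the embedding, I would start from an integral representation of the JSD, writing each KL summand as an integral over the interpolating family $m_t = t p + (1-t) q$ of a weighted squared difference of densities, and then identify that integrand pointwise with the squared gap between a feature map $\Phi_p(t, x)$ and $\Phi_q(t, x)$ on $[0,1] \times \Omega$. Equivalently, one can invoke Schoenberg's theorem after verifying that $JS$ is a conditionally negative-definite kernel on the simplex; this reduces in turn to well-known concavity properties of the Shannon entropy $H$, exploiting the identity $JS(p,q) = H(\frac{p+q}{2}) - \frac{1}{2}(H(p) + H(q))$.

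The main obstacle is producing a clean Hilbert embedding, or equivalently verifying the conditional negative-definiteness of $JS$, since this is where all of the delicate analytic content of the theorem is concentrated; in comparison, the remaining metric axioms and the passage from the embedding to the triangle inequality are essentially immediate. If space is tight, I would simply cite Endres--Schindelin for the embedding and focus the written proof on how it implies the stated inequality.
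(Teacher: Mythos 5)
The paper offers no proof of this proposition at all---it is stated as a review of the classical Endres--Schindelin result---so your plan of sketching the standard argument and citing that paper for the hard step is essentially the same approach. Your outline is correct (the triangle inequality for $\sqrt{JS}$ does follow from an isometric Hilbert-space embedding, equivalently from conditional negative definiteness of the JSD kernel via Schoenberg's theorem), with the one caveat that negative definiteness does \emph{not} reduce to mere concavity of the Shannon entropy (concavity only gives nonnegativity of $JS$); that step genuinely requires the integral-representation or embedding machinery you describe, which is why deferring to Endres--Schindelin there is the right call.
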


Finally, the following fact about the JSD relating it to the mutual information of a mixture distribution and its indicator variable will be useful in our analysis.
\begin{proposition}
  \label{prop:js-mi}
  Let $Z$ be a uniform categorical indicator variable with support $[k]$ and $P_i, i\in [k]$ be distributions.
  Let $X \sim P_z, z \sim Z$ be the random variable associated with the mixture distribution of the $P_i$ controlled by the indicator $Z$.
  Then $I(X; Z) = JS(P_1, \dots, P_k)$.
\end{proposition}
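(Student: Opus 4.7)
The plan is to unfold the mutual information $I(X;Z)$ directly via its conditional-KL form and match it term-by-term to the definition of the JSD, exploiting the fact that $Z$ is uniform.

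First, I would invoke the standard identity
\[
I(X;Z) = \E_{z \sim Z}\bigl[ {\rm KL}(P_{X \mid Z=z} \,\|\, P_X) \bigr].
\]
By construction the conditional law $P_{X \mid Z=i}$ is exactly $P_i$, and since $Z$ is uniform on $[k]$ this collapses to
\[
I(X;Z) = \frac{1}{k} \sum_{i=1}^k {\rm KL}(P_i \,\|\, P_X).
\]

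Next I would identify the marginal $P_X$ using total probability. Because $P(Z=i) = 1/k$ for every $i$,
\[
P_X(\cdot) = \sum_{i=1}^k P(Z=i) \, P_{X \mid Z=i}(\cdot) = \frac{1}{k} \sum_{i=1}^k P_i(\cdot),
\]
which is exactly the mean distribution $M$ appearing in Definition~\ref{def:js}. Substituting back gives
\[
I(X;Z) = \frac{1}{k} \sum_{i=1}^k {\rm KL}(P_i \,\|\, M) = JS(P_1, \dots, P_k),
\]
as claimed.

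There is no substantive obstacle: each step is either a standard decomposition of mutual information or a direct consequence of $Z$ being uniform. The one subtlety worth flagging in the writeup is that the matching is not tautological --- the marginal $P_X$ must be computed as a mixture, and the uniformity of $Z$ is what forces its weights to coincide with the $1/k$ weights in the definition of the JSD. As a sanity check one could also verify the identity via the alternative decomposition $I(X;Z) = H(Z) - H(Z \mid X)$, which recovers the same JSD after a short rearrangement, but the conditional-KL route above is the most direct path to matching the definition verbatim.
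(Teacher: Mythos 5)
Your proof is correct: the decomposition $I(X;Z)=\E_{z\sim Z}\left[{\rm KL}(P_{X\mid Z=z}\,\|\,P_X)\right]$, the identification $P_{X\mid Z=i}=P_i$, and the computation of the marginal $P_X=\frac{1}{k}\sum_i P_i=M$ together give exactly the sum in Definition~\ref{def:js}. The paper states this proposition as a known property of the JSD without supplying a proof, and your argument is the standard one it implicitly relies on, so there is nothing to correct or compare further.
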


Finally, we review
standard results (e.g., from the GAN literature) on the relationship between discriminators and the JS divergence,
which relates the loss of an optimal discriminator to the JSD of the two distributions.
We include a proof for completeness.

\begin{proposition}
  \label{prop:js-disc}
  Consider two domains $A$ and $\tilde{A}$ (i.e., distributions on a common support $\mathcal{A}$),
  with densities $p(a), \tilde{p}(a)$ respectively.
  Consider a discriminator $D : \mathcal{A} \to \mathbb{R}$ optimized to maximize the loss
  \[
    \mathcal{L}(D)
    = \frac{1}{2} \mathbb{E}_{a \sim p(a)} \log D(a) + \frac{1}{2} \mathbb{E}_{a \sim \tilde{p}(a)} \log (1-D(a)).
  \]
  Then the value of this loss for the optimal discriminator $D^*$ is $JS(A, \tilde{A}) - \log 2$.
\end{proposition}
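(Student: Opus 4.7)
The plan is to follow the standard textbook argument from the original GAN paper, adapted to the notation here. First I would write the expectation as an integral over the common support $\mathcal{A}$, namely
\begin{equation*}
  \mathcal{L}(D) = \frac{1}{2}\int_{\mathcal{A}} \bigl[ p(a)\log D(a) + \tilde{p}(a)\log(1-D(a)) \bigr]\,da,
\end{equation*}
and observe that since $D$ can be chosen independently at each point $a$ (we work in the nonparametric, optimal-discriminator setting), the maximization can be done pointwise on the integrand.

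Next I would fix $a$ and maximize $h(t) = p(a)\log t + \tilde{p}(a)\log(1-t)$ over $t \in (0,1)$. Differentiating and setting $h'(t) = p(a)/t - \tilde{p}(a)/(1-t) = 0$ gives the familiar closed form $D^*(a) = p(a)/(p(a)+\tilde{p}(a))$, with a brief check that the second derivative is negative so this is indeed a maximum (and a remark handling the measure-zero set where $p(a) + \tilde p(a) = 0$, on which $D^*$ can be defined arbitrarily).

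Substituting $D^*$ back and factoring out $\log 2$ from the denominators, I would rewrite
\begin{equation*}
  \mathcal{L}(D^*) = \frac{1}{2}\Bigl[ \mathrm{KL}\bigl(p \,\|\, \tfrac{p+\tilde{p}}{2}\bigr) + \mathrm{KL}\bigl(\tilde{p} \,\|\, \tfrac{p+\tilde{p}}{2}\bigr) \Bigr] - \log 2,
\end{equation*}
where the $-\log 2$ arises because $\log \frac{p}{p+\tilde p} = \log \frac{p}{(p+\tilde p)/2} - \log 2$ (and analogously for the other term), so the two $-\frac{1}{2}\log 2$ contributions sum to $-\log 2$. By \cref{def:js} with $k=2$, the bracketed quantity is exactly $2\,JS(A,\tilde{A})$ up to the $1/2$ factor, yielding $\mathcal{L}(D^*) = JS(A,\tilde{A}) - \log 2$ as claimed.

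There is no real obstacle: the only subtlety is the pointwise maximization step, which requires noting that $D$ ranges over all measurable functions so the supremum commutes with the integral, and handling the boundary case where $p(a) + \tilde{p}(a) = 0$ (a null set, so it does not affect the integral). Everything else is mechanical substitution and recognition of the KL and JS definitions.
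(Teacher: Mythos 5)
Your proposal is correct and follows essentially the same route as the paper's proof: pointwise maximization of the integrand to obtain $D^*(a) = p(a)/(p(a)+\tilde{p}(a))$, followed by substitution and recognition of the two KL terms against the mixture $(p+\tilde{p})/2$ with the $-\log 2$ correction. Your additional remarks on the second-derivative check, the null set where $p(a)+\tilde{p}(a)=0$, and the nonparametric supremum commuting with the integral are careful touches the paper omits, but they do not change the argument.
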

\begin{proof}
  Differentiate the loss with respect to the discriminator's output $D(a)$ for any example $a \in \mathcal{A}$,
  which yields
  \begin{align*}
    \frac{1}{2} p(a) \frac{1}{D(a)} - \frac{1}{2} \tilde{p}(a) \frac{1}{1-D(a)}.
  \end{align*}
  The loss is maximized at $D^*(a) = \frac{p(a)}{p(a) + \tilde{p}(a)}$.
  The result follows from plugging this discriminator into the loss and using \cref{def:js}:
  \begin{align*}
    \mathcal{L}(D^*)
    &= \frac{1}{2} \mathbb{E}_{a \sim p(a)} \log \frac{p(a)}{p(a) + \tilde{p}(a)} + \frac{1}{2} \mathbb{E}_{a \sim \tilde{p}(a)} \frac{\tilde{p}(a)}{p(a) + \tilde{p}(a)}
    \\
    &= \frac{1}{2} KL\left(A \| \frac{A + \tilde{A}}{2}\right)
    + \frac{1}{2}  KL\left(\tilde{A} \| \frac{A + \tilde{A}}{2}\right)
    \\&\qquad - \log(2)
    \\
    &= JS(A, \tilde{A}) - \log 2.
  \end{align*}
\end{proof}

\subsection{Subgroup Invariance using Coupled Distributions}
\label{sec:distribution}

A common framework for treating robustness over discrete groups
aims to create \emph{invariances},
or independencies between the learned model's features and these groups.
We review this approach,
before defining a new model for the distributional assumptions used in this work.
The notion of coupled sets we introduce underlies both stages of the framework
and allows for stronger invariance guarantees than previous approaches,
which will be analyzed in \cref{sec:mi-consistency}.

\paragraph{\bf Class-conditioned Subgroup Invariance.}
In order for a model to have the same performance over all values of $Z$,
intuitively it should learn ``$Z$-invariant features'',
which can be accomplished in a few ways.
Invariant Risk Minimization (IRM)~\cite{arjovsky2019invariant} calls the $Z$ labels \emph{environments} and aims to induce
$(Y \mid \phi(X)) \perp Z$,
where $\phi(X)$ are the model's features,
so that the classifier does not depend on the environment.
Another line of work treats $Z$ as \emph{domains} and uses adversarial training to induce invariances of the form $(\phi(X) \perp Z) \mid Y$~\cite{ganin2016domain,li2018deep,long2018conditional},
so that within each class, the model's features look the same across domains.
We call this general approach \emph{class-conditional domain adversarial training} (CDAT),
which attaches a domain $Z$ prediction head per class $Y$,
and adopts an adversarial minmax objective so that the featurizer $\phi(X)$ erases $Z$ related information and reduces the model's dependence on $Z$.

\paragraph{\bf Coupling-conditioned Subgroup Invariance.}
Although previous works generally make no assumptions on how the data $X$ among the groups $Z$ relate to each other,
we note that a common implicit requirement is that there is a ``correspondence'' between examples among different groups.
We codify this distributional assumption explicitly with a notion of coupling, which allows us to define and analyze stronger invariances.

In particular,
we assume that the underlying subgroups are paired or coupled,
so that every example can be translated into the other subgroups.
\cref{def:coupled-sets} formalizes our distributional notion of \emph{coupled sets}.

\begin{definition}
  \label{def:coupled-sets}
  For a given distribution $P$, a \emph{coupled set} within class $y$ is a set $\{x_z\}_{z \in Z_y}$ consisting of one example from each subgroup of y, where each example has the same probability.\footnote{Note that this will typically not hold for the training distribution, since some subgroups may be underrepresented, making it much less probable that examples from those subgroups are sampled in a coupled set. However, we are concerned with robustness to a test distribution where the subgroups are of equal importance and equally likely.}
A \emph{coupling} for a distribution $P$ on $(X, Y, Z)$ is a partition of all examples in $\mathcal{X}$ into coupled sets.
For any example $x \in \mathcal{X}$, let $[x]$ denote its coupled set.
Let $[x]_1, \dots, [x]_k$ denote the elements of a coupled set $[x]$ in a class with $k$ subgroups.
Let $[X]$ denote the random variable that samples a coupled set; i.e.\ taking $[x]$ for a random $x$ sampled from any fixed subgroup $z$.
\end{definition}
Additionally, we say that a distribution is \emph{subgroup-coupled} if it satisfies \cref{def:coupled-sets},
i.e.\ it has a coupling.

In the context of subgroups of a class $y$, this assumption entails that every example can be factored into its subgroup and coupled set membership. All examples that are members of a particular coupled set can be thought of as sharing a set of common features that signal membership in the class. Separately, examples that are members of a particular subgroup can be thought to share common features that signal subgroup membership.
Together, these two pieces of information identify any example from class $c$.

We represent this assumption by letting the (unobserved) random variable $[X]$ represent the ``class identity'' of an example $X$, which can be thought of as the class features that aren't specific to any subgroup.
Thus, the full generating process of the data distribution $(X, Y, Z, [X])$ consists of independently choosing a coupled set $[X]$ and subgroup $Z$ within a class $Y$, which together control the actual example $X$.
Note that $[X]$ and $Z$ are both more fine-grained and thus carry more information than $Y$.
This process is illustrated in Figure~\ref{fig:graphical-model}.
Figure~\ref{fig:graphical-model-example} illustrates this concept for the MNIST-Corrupted dataset~\cite{mu2019mnist}. Given a digit class such as $Y=3$, subgroups correspond to corruptions such as zigzags and dotted lines applied to the digits. A coupled set consists of these corruptions applied to a clean digit.

Definition~\ref{def:coupled-sets} allows us to reason about the following stronger invariances.
Given class $y \in \mathcal{Y}$, every example in subgroup $z \in {Z}_y$ implicitly has corresponding examples in all subgroups $Z_{y}$ within its class,
and the learned features for each of these coupled sets should be identical in order to equalize performance between subgroups.
Thus instead of the weaker goal $(\phi(X) \perp Z) \mid Y$,
we use the stronger coupling-conditioned invariance $(\phi(X) \perp Z) \mid Y, [X] = (\phi(X) \perp Z) \mid [X]$.

\begin{figure*}[ht]%
\centering
 \begin{subfigure}{0.45\linewidth}
    \centering
    \includegraphics[width=0.9\linewidth]{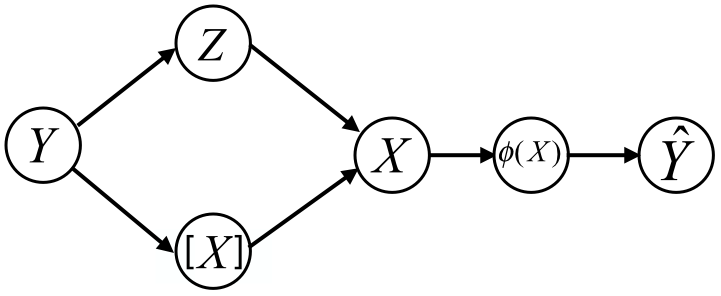}
    \caption{Joint distribution of examples $X$ with their class labels $Y$, subgroup labels $Z$, and coupled sets $[X]$.}
    \label{fig:graphical-model}
  \end{subfigure}
  \hfill
  \begin{subfigure}{0.45\linewidth}
    \centering
    \includegraphics[width=0.9\linewidth]{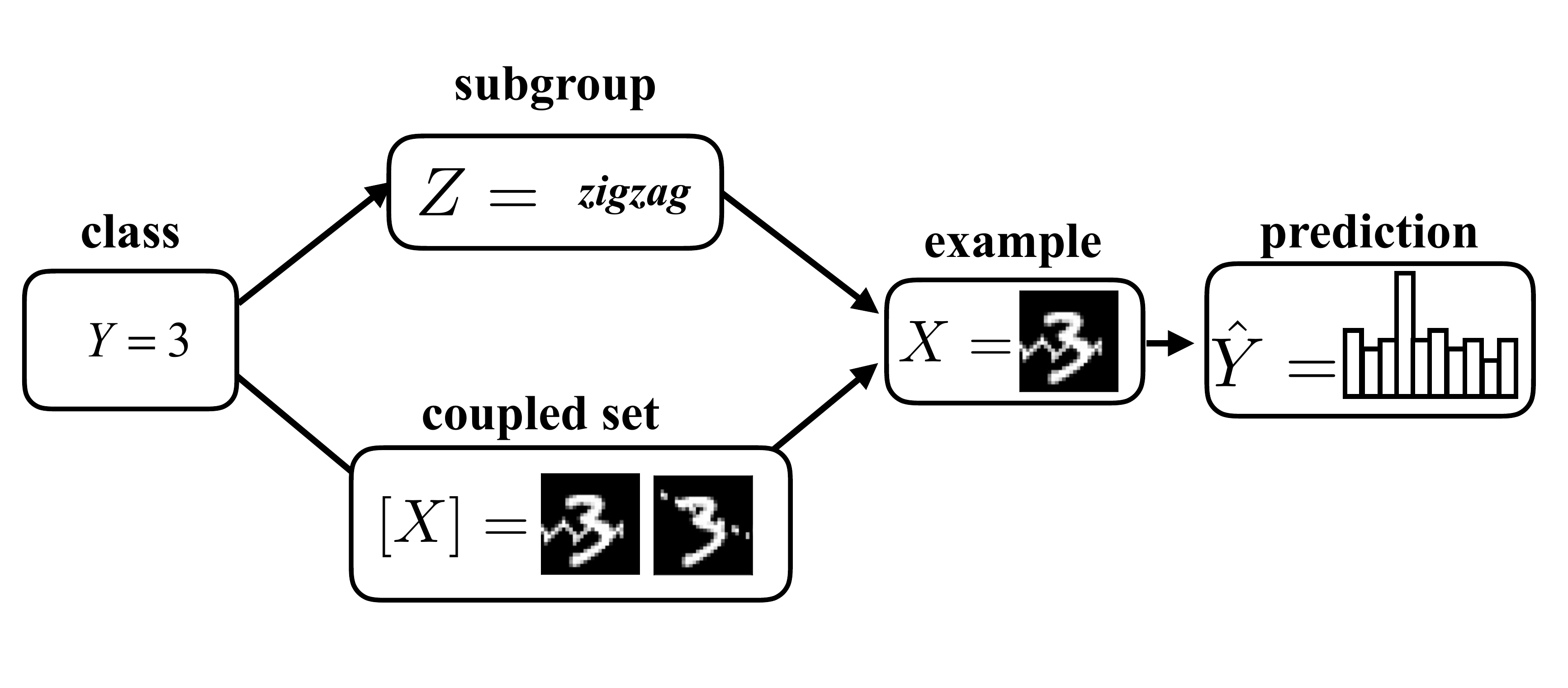}
    \caption{Illustration with the MNIST-Corrupted dataset~\cite{mu2019mnist}, where subgroups $Z$ are different types of corruptions.}
    \label{fig:graphical-model-example}
  \end{subfigure}
  \caption{Subgroup-coupled distributions separate the coupled set to which an example belongs (with respect to their class), from its subgroup label.}
\end{figure*}

Note that since features matter insofar as their effect on the final output $\Yh$, it suffices to look at the case $\phi(X) = \Yh$.
We first show in Section~\ref{sec:mi-adversarial} that CDAT methods target the invariance $(\Yh \perp Z) \mid Y$ by minimizing a lower bound for the conditional mutual information,
$I(\Yh; Z \mid Y)$ (Lemma~\ref{lmm:mi-adversarial}).

In Section~\ref{sec:mi-consistency}, we prove our main result: our combined objective function~\eqref{eq:overall-objective} targets the stronger invariance
$(\Yh \perp Z) \mid [X]$
by upper bounding the corresponding MI,
which can be interpreted as forcing matching outputs for the examples in every coupled set.

\subsection{MI Bounds for Class-conditioned Invariance}
\label{sec:mi-adversarial}

Recall that the high-level goal of CDAT
is to induce independencies between subgroup information and the model's feature representation.
In order to induce the desired invariance $(\phi(X) \perp Z) \mid Y$ of class features from subgroup identities,
a natural approach is to minimize the conditional mutual information $I(\phi(X); Z \mid Y)$,
which is minimized at $0$ when the invariance is satisfied and grows when $\phi(X)$ and $Z$ are predictive of each other.
This mutual information can be estimated using standard techniques.
\begin{lemma}%
  \label{lmm:mi-adversarial}
  CDAT minimizes a lower bound on the mutual information $I(\phi(X); Z \mid Y)$, where $\phi(X)$ is the feature layer where the domain prediction head is attached.
\end{lemma}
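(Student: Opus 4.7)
The plan is to recognize that CDAT's objective, when written out, fits the Barber--Agakov variational lower bound for mutual information. CDAT attaches a class-conditional domain prediction head $q_\psi(z \mid \phi(x), y)$, trains $\psi$ to maximize the log-likelihood of predicting $z$ from $\phi(x)$ within each class, and simultaneously trains the featurizer $\phi$ adversarially in the opposite direction. My first step would be to write this minmax objective explicitly and strip away any constants that do not involve $\phi$ or $\psi$.

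Next, I would invoke the identity
\[
I(\phi(X); Z \mid Y) = H(Z \mid Y) - H(Z \mid \phi(X), Y),
\]
and use the standard fact that for any variational conditional $q_\psi$,
\[
-H(Z \mid \phi(X), Y) \;\geq\; \E_{(x,y,z) \sim P}\bigl[\log q_\psi(z \mid \phi(x), y)\bigr],
\]
with equality when $q_\psi$ matches the true posterior. Combining these gives
\[
I(\phi(X); Z \mid Y) \;\geq\; H(Z \mid Y) + \E\bigl[\log q_\psi(z \mid \phi(x), y)\bigr] \;=:\; \mathcal{J}(\phi, \psi).
\]
Since $H(Z \mid Y)$ depends only on the data distribution, the inner $\max_\psi \mathcal{J}(\phi, \psi)$ is exactly the quantity the domain head drives up during training, i.e.\ the tightest variational lower bound achievable in the chosen parametric family. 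The outer $\min_\phi$ then reduces this lower bound, which is precisely the claim.

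I would also take care to handle the class-conditional structure correctly: attaching one head per class $y$ makes the variational family factor as $\{q_{\psi_y}(z \mid \phi(x))\}_{y \in \mathcal{Y}}$, which aligns with the decomposition of $I(\phi(X); Z \mid Y)$ as an average over $Y$ of per-class mutual informations $I(\phi(X); Z \mid Y = y)$. Under this factorization, the argument above applies within each class and then averages.

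The main obstacle, and the point I would be most careful to flag in the write-up, is the distinction between (a) the lower bound being information-theoretically tight when $q_\psi$ equals the true posterior $p(z \mid \phi(x), y)$, and (b) the bound being loose in practice because $q_\psi$ lies in a restricted parametric family. The lemma as stated only asserts that CDAT minimizes \emph{a} lower bound, which this argument establishes cleanly; but the reader should understand that minimizing a lower bound does not imply minimizing the true MI. This is exactly the gap that motivates the paper's subsequent introduction of an \emph{upper} bound in \cref{thm:main}, so I would phrase the conclusion to set up that contrast.
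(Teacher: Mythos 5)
Your proposal is correct and follows essentially the same route as the paper's proof: decompose $I(\phi(X); Z \mid Y) = H(Z \mid Y) - H(Z \mid \phi(X), Y)$, lower-bound the negative conditional entropy by the expected log-likelihood of a variational model $q_\psi(z \mid \phi(x), y)$ (the domain prediction head), and identify the resulting expression, up to the constant $H(Z\mid Y)$, with the cross-entropy objective that CDAT maximizes over $\psi$ and minimizes over $\phi$. Your additional remarks on the per-class factorization of the variational family and on the looseness of minimizing a lower bound are sound and consistent with the paper's framing.
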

\begin{proof}
We have
\begin{align*}%
  I(\phi(X); Z \mid Y) &= H(Z \mid Y) - H(Z \mid \phi(X) , Y)
  \\&= H(Z \mid Y) + \mathbb{E}_{x,y \sim p(x,y)}\mathbb{E}_{z \sim p(z | \phi(x), y)} \left[ \log(p(z | \phi(x), y)) \right]
  \\&\ge H(Z \mid Y) + \mathbb{E}_{x,y \sim p(x,y)}\mathbb{E}_{z \sim p(z | \phi(x), y)} \left[ \log(p_\psi(z | \phi(x), y)) \right]
  \\&= H(Z \mid Y) + \mathbb{E}_{y,z,\phi(x)} \left[ \log(p_\psi(z | \phi(x), y)) \right],
\end{align*}
which bounds the MI variationally through a parametrized conditional model $p_\psi$.
Up to an additive term $H(Z \mid Y)$ which is a constant of the data distribution, this is simply the cross-entropy loss of a model trained on top of the featurizer $\phi$ to predict $Z$ from $\phi(X)$ and $Y$,
which coincides with the domain adversarial training approach.
\end{proof}

By specializing $\phi(X)$ to $\hat{Y}$, we obtain
\begin{corollary}
\label{cor:mi-adversarial}
If CDAT attaches a domain prediction head to the prediction layer $\hat{Y}$, it optimizes a lower bound on $I(\hat{Y}; Z \mid Y)$.
\end{corollary}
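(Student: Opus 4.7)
The plan is to obtain this result as a direct specialization of Lemma~\ref{lmm:mi-adversarial}. The key observation is that the proof of that lemma treats $\phi(X)$ as an entirely generic function of $X$: the variational lower bound
\[
I(\phi(X); Z \mid Y) \;\ge\; H(Z \mid Y) + \mathbb{E}_{y,z,\phi(x)}\left[\log p_\psi(z \mid \phi(x), y)\right]
\]
follows purely from the definition of conditional mutual information and the non-negativity of KL divergence applied to a variational conditional model $p_\psi$, and it does not use any structural property of the feature layer (its dimensionality, whether it is continuous, or whether it sits before the output layer). In particular, the derivation is valid whether $\phi(X)$ is a high-dimensional hidden representation or the categorical output $\hat{Y}$ of $f_\theta$.

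Given this, I would simply instantiate the lemma with $\phi(X) = \hat{Y}$. The right-hand side of the bound then becomes, up to the data-dependent constant $H(Z \mid Y)$, the negative cross-entropy loss of a class-conditional domain classifier $p_\psi(z \mid \hat{y}, y)$ that takes the model's prediction and the true class label as inputs — which is exactly the adversarial head that CDAT attaches at the prediction layer. Since CDAT maximizes this bound over $\psi$ (tightening it) while the primary classifier's parameters minimize it, CDAT is by construction optimizing a lower bound on $I(\hat{Y}; Z \mid Y)$, giving the claim.

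There is no real obstacle here: the corollary is a one-line specialization and the only point worth stating explicitly is that the lemma's proof does not require $\phi(X)$ to be an intermediate representation, so the substitution $\phi(X) = \hat{Y}$ is immediate. No additional calculation is needed beyond invoking Lemma~\ref{lmm:mi-adversarial}.
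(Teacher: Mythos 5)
Your proposal is correct and matches the paper's own argument exactly: the paper also obtains this corollary as an immediate specialization of Lemma~\ref{lmm:mi-adversarial} by setting $\phi(X) = \hat{Y}$. Your additional remark that the lemma's variational bound uses no structural property of the feature layer is a fair (and accurate) justification for why the substitution is licit.
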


Thus, although approaches involving domain adversarial training~\cite{ganin2016domain,li2018deep} motivate their approach through alternate concepts such as $\mathcal{H}$-divergences and GAN-based adversarial games,
we see that they are implicitly minimizing a simple variational estimate for mutual information.

In \cref{sec:experiments}, \cref{tab::mutual_info}'s reported estimate of the mutual information uses \cref{cor:mi-adversarial}.

\subsection{MI Bounds for Coupling-conditioned Invariance}
\label{sec:mi-consistency}

The stronger distributional assumptions of Definition~\ref{def:coupled-sets} allow us to analyze the invariance 
$\phi(X)\perp Z \mid [X]$,
which can be interpreted as forcing matching features for the data in every coupled set.

\paragraph{\bf True Coupled Sets.} Given a subgroup-coupled distribution, access to coupled sets
allows analysis of stronger invariance assumptions.

First, we confirm that this is indeed a stronger notion of invariance, that is
\begin{equation}%
  \label{eq:conditional-mi}    
  I(Z; \phi(X) \mid [X]) \ge I(Z; \phi(X) \mid Y).
\end{equation}
This follows from the chain rule for mutual inequality:
  \begin{equation}%
    \begin{aligned}%
      & I(Z; \phi(X) \mid [X]) = I(Z; \phi(X) \mid Y, [X])
      \\&= I(Z; [X] \mid Y) + I(Z; \phi(X) \mid Y, [X])
      \\&= I(Z; [X], \phi(X) \mid Y)
      \\&= I(Z; \phi(X) \mid Y) + I(Z; [X] \mid Y, \phi(X)).
    \end{aligned}
  \end{equation}
Here, the first two equalities follow from Definition~\ref{def:coupled-sets}
(in particular, $[X]$ and $Z$ are more fine-grained than $Y$),
and the last two follow from the chain rule for mutual information.

In particular, equation~\eqref{eq:conditional-mi} quantifies the intuition that conditioning on an example's coupled set reveals more information then just conditioning on its class.
Conversely, minimizing the LHS of \eqref{eq:conditional-mi} necessarily minimizes the objective $I(Z; \phi(X) \mid Y)$ in ~\cite{li2018deep}, and an additional non-negative term
$I(Z; [X] \mid \phi(X), Y)$ 
relating the features and identity of examples.

Moreover, the features $\phi(X)$ are only relevant insofar as their ability to predict the label.
Specializing $\phi(X)$, this stronger conditional MI is related to the model's predictions;
it is exactly equal to the self-consistency regularizer~\eqref{eq:self-consistency}
if the model had access to true coupled sets $[x]$.

Thus, in the case where $\phi(X) = \hat{Y}$ is simply the model's prediction, this MI is simply the Jensen-Shannon divergence of the model's predictions.
\begin{restatable}{lmm}{lmmconditionalmi}
  \label{lmm:conditional-mi}
\begin{equation}
  \label{eq:conditional-mi2}
  \begin{aligned}
    I(Z; \hat{Y} \mid [X]) &= 
    \mathbb{E}_{[x] \sim [X]} JS\left( f_\theta([x]_1), \dots, f_\theta([x]_k) \right)
    \end{aligned}
\end{equation}
\end{restatable}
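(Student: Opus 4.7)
The plan is to unfold the conditional mutual information using the coupling structure in \cref{def:coupled-sets} and then apply \cref{prop:js-mi}, which already identifies a mixture-vs-indicator MI with a Jensen-Shannon divergence. Concretely, I would start by observing that the distributional assumption on coupled sets makes the conditional distribution of $(X, Z)$ given $[X] = [x]$ extremely simple: $Z$ is uniform over the $k$ subgroups of the corresponding class, and once $Z = z$ is drawn, the example $X = [x]_z$ is determined. Hence $\hat{Y} \mid ([X]=[x], Z=z)$ is exactly the categorical distribution $f_\theta([x]_z)$, and $\hat{Y} \mid [X]=[x]$ is the uniform mixture $\frac{1}{k}\sum_{z} f_\theta([x]_z)$.

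Next I would apply the definition of conditional mutual information pointwise over $[x]$:
\begin{equation*}
  I(Z; \hat{Y} \mid [X]) = \mathbb{E}_{[x] \sim [X]} \, I(Z; \hat{Y} \mid [X] = [x]).
\end{equation*}
For each fixed coupled set $[x]$, the pair $(Z, \hat{Y})$ is exactly the setup of \cref{prop:js-mi}: $Z$ is a uniform categorical indicator on $[k]$, and $\hat{Y}$ is a sample from the mixture $\frac{1}{k}\sum_{z} f_\theta([x]_z)$ controlled by $Z$. Applying that proposition gives
\begin{equation*}
  I(Z; \hat{Y} \mid [X] = [x]) = JS\!\left(f_\theta([x]_1), \ldots, f_\theta([x]_k)\right),
\end{equation*}
and taking the outer expectation over $[X]$ yields the claimed identity.

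I expect the argument to be essentially bookkeeping once the coupling assumption is used correctly, so the main subtlety, rather than an obstacle, is the careful handling of the joint law of $(X, Y, Z, [X])$: I need to verify that the assumed generative process (sample class $Y$, then independently sample coupled set $[X]$ and subgroup $Z$ within that class, and let $X = [X]_Z$) makes $Z$ conditionally uniform given $[X]$, so that \cref{prop:js-mi} applies with a uniform indicator. Since \cref{def:coupled-sets} explicitly equalizes the probabilities of elements within a coupled set, this uniformity is automatic, and the rest of the proof is a direct application of the previously established JSD-MI correspondence with no further computation needed.
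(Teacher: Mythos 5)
Your proposal is correct and follows essentially the same route as the paper: both decompose $I(Z;\hat{Y}\mid [X])$ as an expectation over coupled sets, identify the conditional law of $\hat{Y}$ given $[X]=[x]$ as the uniform mixture of the $f_\theta([x]_z)$ with $Z$ as the indicator, and invoke \cref{prop:js-mi}. Your explicit verification that $Z$ is conditionally uniform given $[X]$ (via the equal-probability clause of \cref{def:coupled-sets}) is a point the paper handles only implicitly, but the argument is the same.
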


\begin{proof}
  For any features $\phi$, the mutual information can be written
  \begin{align*}%
    I(Z; \phi(X) \mid [X]) &= \mathbb{E}_{[X]} I\left(\E[Z \mid [X]];\E[\phi(X) \mid [X]] \right) 
    \\&= \mathbb{E}_{[X]} I\left( Z; \E[\phi(X) \mid [X]] \right)
  \end{align*}
  where the random variable $\E[\phi(X) \mid [X]]$ denotes the formal conditional expectation.
  The second equality follows since $(Z \perp [X]) \mid Y$.

  Consider specializing this to the case when $\phi(X) = \hat{Y}$,
  i.e.\ it represents the random variable where an output class prediction $\hat{Y}$ is sampled from the final class probability predictions $f_\theta(X)$ of the model.
  Since this is distributed as $P_{\hat{Y} \mid X_z} = f_\theta(X_z)$,
  we obtain
  \begin{equation}
    \label{eq:conditional-mi-js}
    \begin{aligned}%
      I(Z; \hat{Y} \mid [X]) &= \mathbb{E}_{[x] \sim [X]}\left[ I\left( Z; \frac{1}{k} \sum_{i \in [k]} f_\theta([x]_i) \right) \right]
      \\&= \mathbb{E}_{[x] \sim [X]} JS\left( f_\theta([x]_1), \dots, f_\theta([x]_k) \right)
    \end{aligned}
  \end{equation}
  where the second equality follows by Proposition~\ref{prop:js-mi}.
\end{proof}

\paragraph{\bf Augmented Coupled Sets.} In practice, we may not have true coupled sets $[x]$.
Instead, we use a generative model such as a CycleGAN as a proxy that provides noisy versions of the coupled set,
denoted $[\tilde{x}] = ([\tilde{x}]_1, \dots, [\tilde{x}]_k)$ where $[\tilde{x}]_i$ are individual augmented examples per subgroup. %
However, the generative augmentation model may not perfectly model the subgroup distribution; for example, it may introduce artifacts.

We can model this distributional assumption explicitly:
\begin{defn}
  \label{def:noisy-examples}
  Each subgroup $z$, which has a distribution $P_z$ over $\mathcal{X}$,
  has a corresponding \emph{augmented subgroup} $\tilde{z}$ with distribution $P_{\tilde{z}}$ representing augmented examples through the generative model(s).
  In particular, we suppose for any coupled set $[x]$, it has realizations $[x]_z$ in subgroup $z$ and $[\tilde{x}]_{z}$ in subgroup $\tilde{z}$.%
\end{defn}
We also use the notation $[\tilde{x}]$ for a generated coupled set and $[\tilde{x}]_z$ as its realization in subgroup $z$ (a specific augmented example). %
Note that $[\tilde{x}]$ and the notation $\tilde{x}_{Z_y}$ from \cref{sec:stage2} refer to the same thing, the set of augmented examples.

Figure~\ref{fig:cyclegan} also illustrates the concept of \cref{def:noisy-examples}:
original domains $A, B$ have corresponding domains $\tilde{A}, \tilde{B}$ that are the images of the generators $F, G$.

We can control the difference between augmented and true subgroup distribution in two ways.
First, the translation-loss $\mathcal{L}_t$~\eqref{eq:translation-consistency} regularizes the average predictions from the augmentations to match those of the original example, constraining the prediction model to ignore general distribution shifts introduced by the generative models.

Moreover, the discrepancy between the loss we are minimizing via CycleGAN-augmented examples
$\mathcal{L}_s = \E_{x}$ $JS\left( f_\theta([\xt]_1), \dots, f_\theta([\xt]_k) \right)$~\eqref{eq:self-consistency} and the true objective $JS\left( f_\theta([x]_1), \dots, f_\theta([x]_k) \right)$ can be bounded by the loss of the pair-conditioned CycleGAN discriminators (Section~\ref{sec:cyclegan}), via metric properties of the JSD.

Models such as CycleGAN directly control the deviation of augmentions from the original examples, via the GAN discriminators and consistency losses.
The following Lemma says that CycleGAN discriminator loss is the divergence between the original distribution in subgroup $z$,
and the generated distribution of subgroup $z$, paralleling standard GAN results~\cite{gan}.

\begin{restatable}{lmm}{lmmdiscriminator}
\label{lmm:discriminator}
The optimal  %
discriminator between original subgroup distribution $P_z$ and augmented subgroup
$P_{\tilde{z}}$
has loss
$\mathcal{L}_{CG}^* = \mathbb{E}_{[x] \sim [X]} JS([x]_z, [\xt]_z) - \log 2$.
\end{restatable}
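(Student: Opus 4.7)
The plan is to apply Proposition~\ref{prop:js-disc} in a conditional form, with each coupled set $[x]$ playing the role of side information. The relevant CycleGAN discriminator here is pair-conditioned: given the pair $\{[x]_z, [\xt]_z\}$ (the original and augmented subgroup-$z$ elements of the same coupled set), it outputs a probability to each being the true original. Writing the pair-conditioned loss as an expectation over $[x] \sim [X]$ yields
\begin{equation*}
\mathcal{L}_{CG}(\mathcal{D}) = \tfrac{1}{2}\, \E_{[x]}\bigl[ \log \mathcal{D}_{[x]_z}(\{[x]_z,[\xt]_z\}) + \log\bigl(1-\mathcal{D}_{[\xt]_z}(\{[x]_z,[\xt]_z\})\bigr)\bigr],
\end{equation*}
which precisely mirrors the loss of Proposition~\ref{prop:js-disc} once we fix $[x]$.

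Conditioning on a fixed coupled set $[x]$, the original and augmented subgroup-$z$ samples realize conditional distributions $[x]_z \mid [X]=[x]$ and $[\xt]_z \mid [X]=[x]$ on $\mathcal{X}$, with stochasticity stemming from the generator and any residual sampling inside $[x]$. In this conditional subproblem the pair-conditioned discriminator reduces to the unconditional discriminator of Proposition~\ref{prop:js-disc} between these two conditional distributions, so applying that proposition pointwise delivers a conditional optimum of $JS([x]_z, [\xt]_z) - \log 2$.

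To pass from conditional to unconditional, I would interchange the supremum over discriminators with the expectation over $[x]$. Because the pair-conditioned discriminator is permitted to depend on the pair it sees, assembling the per-coupled-set optimizers into a single global rule yields an admissible discriminator whose loss equals the expected conditional optimum. Taking $\E_{[x] \sim [X]}$ on both sides then gives $\mathcal{L}_{CG}^* = \E_{[x] \sim [X]} JS([x]_z, [\xt]_z) - \log 2$.

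The main obstacle is nailing down the conditional semantics of $JS([x]_z, [\xt]_z)$, specifically what randomness remains once the coupled set is fixed (a deterministic generator makes both distributions Diracs, while a stochastic one yields genuine distributions), and rigorously justifying the sup/expectation interchange. Once one grants that the pair-conditioned discriminator class is rich enough to encode the optimal per-pair rule measurably (analogous to the implicit assumption in Proposition~\ref{prop:js-disc}), the remainder is a direct pointwise invocation of that proposition followed by a Fubini-style exchange.
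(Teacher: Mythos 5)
Your proposal matches the paper's own argument: both apply Proposition~\ref{prop:js-disc} conditionally per coupled set and then observe that a single pair-conditioned discriminator (taking both $[x]_z$ and $[\xt]_z$ as input) realizes the per-coupled-set optimum, justifying the exchange of the supremum with the expectation over $[x]$. The subtleties you flag about the conditional semantics of $JS([x]_z,[\xt]_z)$ and the measurable-selection issue are real but are likewise left implicit in the paper.
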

\begin{proof}[Proof of \cref{lmm:discriminator}]
  By Proposition~\ref{prop:js-disc},
  \begin{align*}
    \E_{[x] \sim [X]} JS([x]_z, [x]_{\tilde{z}})
    = \log 2
    + \frac{1}{2} \E_{[x] \sim [X]} \log D_{[x]}^z([x]_z)
    + \frac{1}{2} \E_{[x] \sim [X]} \log (1 - D_{[x]}^z([\xt]_{z}))
  \end{align*}
  where $D_{[x]}^z$ is a discriminator for this coupled set (within subgroup $z$).
  Instead of training a separate discriminator per example or coupled set, it is enough to train a single discriminator $D$ conditioned on this specific coupled set $([x]_z, [x]_{\tilde{z}})$.
  In other words this is a discriminator whose input is both the original example $[x]_z$ and a generated version $[x]_{\tilde{z}}$, and 
  for each input guesses its chance of being a real example.
  This is exactly the pair-conditioned discriminator described in Section~\ref{sec:cg_background}.
\end{proof}

\paragraph{\bf Proof of \cref{thm:main}.} We finally put the pieces together to prove the main result, restated here for convenience.
\thmmain*

First, the equivalence of the quantity we care about $I(Z; \hat{Y}; [X])$ and the consistency loss on true coupled sets is given by \cref{lmm:conditional-mi}.
It remains to bound $\E JS(f_\theta([x]_1), f_\theta([x]_2))$, which can be bounded by the consistency loss on augmented examples
$\E JS(f_\theta([\tilde{x}]_1), f_\theta([\tilde{x}]_2))$
and the optimal CycleGAN losses
$\E JS(f_\theta([x]_i), f_\theta([\tilde{x}]_i))$
by metric properties of the JSD.

\begin{proof}[Proof of \cref{thm:main}]
  Consider any fixed subgroup $z$ and let $\bar{X}_z$ denote the R.V. from the mixture distribution of $P_z$ and $P_{\tilde{z}}$, i.e.\ either a true example or an augmented example from subgroup $z$.
  Let $W$ denote the (binary) indicator of this mixture.
  Then
  \begin{equation}
    \label{eq:data-processing}
    JS(f_\theta([x]_z), f_\theta([\tilde{x}]_z)) = I(W; f_\theta(\bar{X}_z)) \le I(W; \bar{X}_z) = JS([x]_z, [\tilde{x}]_z),
  \end{equation}
  where the equalities are Proposition~\ref{prop:js-mi}
  and the inequality is an application of the \emph{data processing inequality} on the Markov chain $W \to \bar{X}_z \to f_\theta(\bar{X}_z)$.

  Combining equation~\eqref{eq:data-processing} with \cref{lmm:discriminator},
  applying the definition of $\mathcal{L}_{CG}^{z}$,
  and summing over two groups $z=1, z=2$ yields
  \begin{equation}
    \label{eq:thm-1}
    \begin{aligned}
      & JS(f_\theta([x]_1), f_\theta([\tilde{x}]_1))^{\frac{1}{2}} + JS(f_\theta([x]_2), f_\theta([\tilde{x}]_2))^{\frac{1}{2}}
      \\
      & \le \mathcal{L}_{CG}^{z_1}(x; \theta)^{\frac{1}{2}}
      + \mathcal{L}_{CG}^{z_2}(x; \theta)^{\frac{1}{2}}
    \end{aligned}
  \end{equation}

  By definition of the self-consistency loss \eqref{eq:self-consistency}
  and \cref{def:js},
  \begin{equation}
    \label{eq:thm-2}
    JS(f_\theta([\tilde{x}]_1), f_\theta([\tilde{x}]_2)) = \mathcal{L}_s(x, [\tilde{x}]; \theta),
  \end{equation}
  for any sample $x$ and where $[\tilde{x}]$ denotes the generated coupled set $\{F_1(x), F_2(x)\}$ as usual.
  Denoting the right hand side $\mathcal{L}_s(x; \theta)$ for shorthand,
  summing equations~\eqref{eq:thm-1} and~\eqref{eq:thm-2},
  and using the metric property of the JSD (\cref{prop:js-metric}) gives
  \begin{align*}
    JS(f_\theta([x]_1), f_\theta([x]_2))^{\frac{1}{2}}
    \le \mathcal{L}_s(x; \theta)^{\frac{1}{2}}
    +\mathcal{L}_{CG}^{z_1}(x; \theta)^{\frac{1}{2}}
    + \mathcal{L}_{CG}^{z_2}(x; \theta)^{\frac{1}{2}}.
  \end{align*}
  Finally, squaring and averaging over the dataset and applying \cref{lmm:conditional-mi} gives the result of \cref{thm:main}:
  \begin{align*}
    I(\Yh; Z \mid [X])
    \le
    \E_{x \sim X} \left( \mathcal{L}_{s}(x; \theta)^{\frac{1}{2}} + \mathcal{L}_{CG}^{z_1}(x; \theta)^{\frac{1}{2}} + \mathcal{L}_{CG}^{z_2}(x; \theta)^{\frac{1}{2}}\right)^2.
  \end{align*}
\end{proof}

These pieces can be combined to show that the GAN-based modeling of subgroups (Stage 1) and the consistency regularizer (Stage 2) together minimize the desired identity-conditioned mutual information,
which completes the proof of Theorem~\ref{thm:main}.

\section{Experimental Details}
\label{sec:experimental_details}
We provide detailed information about our experimental protocol and setup for reproducibility, including dataset information in \ref{sec:full_dataset_info}, 

\subsection{Dataset Information}
\label{sec:full_dataset_info}

\begin{table*}
\caption{Number of training, validation and test examples in each dataset.}
\label{tab:dataset-size}
\centering
\resizebox{\textwidth}{!}{%
\begin{tabular}{@{}llllll@{}}
\toprule
\multicolumn{1}{c}{\multirow{1}{*}{\textbf{Dataset}}} & \multirow{1}{*}{\bf Split} & \multicolumn{4}{c}{\textbf{Subgroup Size} $(Y, Z)$} \\ %
\midrule

\multirow{4}{*}{\bf MNIST-Correlation} & & even, clean &  even, zigzag &  odd, clean &  odd, zigzag\\\cmidrule{3-6}
         & train                   & 9900        & 100         & 100        & 9900       \\
  & validation                   & 9900        & 100         & 100        & 9900       \\
 & test                   & 4926        & 4926         & 5074        & 5074       \\
\midrule
\multirow{4}{*}{\bf Waterbirds} & & landbird, land &  landbird, water &  waterbird, land &  waterbird, water\\\cmidrule{3-6}
  &            train                    & 3498        & 184         & 56         & 1057       \\
   &            validation                    & 467        & 466         & 133         & 133       \\
   &            test                    & 2255        & 2255         & 642         & 642       \\
\midrule
\multirow{4}{*}{\bf CelebA-Undersampled} & & non-blonde, female &  non-blonde, male &  blonde, female &  blonde, male\\\cmidrule{3-6}
&     train                & 4054        & 66874       & 22880      & 1387       \\
&         validation            & 8535        & 8276       & 2874      & 182       \\
&        test             & 9767        & 7535       & 2480      & 180       \\
\midrule
\multirow{4}{*}{\bf ISIC} & & benign, no bandage &  benign, bandage &  malignant, no bandage &  malignant, bandage\\\cmidrule{3-6}
&      train           & 8062        & 7420        & 1843       & 0          \\
&      validation      & 1034        & 936        & 204       & 0          \\
& test                 & 1026        & 895        & 239       & 0          \\ \bottomrule
\end{tabular}%
}
\end{table*}
We provide details for preprocessing and preparing all datasets in the paper. Table~\ref{tab:dataset-size} summarizes the sizes of the subgroups present in each dataset.
All datasets will be made available for download.

\paragraph{\bf MNIST-Correlation.} We mix data from MNIST~\cite{lecun1998gradient} and MNIST-Corrupted~\cite{mu2019mnist} to create a controlled setup. We classify digit parity $Y \in \{ {\rm even}, {\rm odd}\}$, where each class is divided into subgroups $Z \in \{{\rm clean}, {\rm zigzag}\}$, drawing digits from MNIST and MNIST-Corrupted (with the zigzag corruption) respectively. 

To generate the dataset, we use the following procedure: 
\begin{itemize}
    \item Fix a total dataset size $N$, and a desired correlation $\rho$.
    \item Sample 
    \begin{itemize}
        \item $\left\lfloor{\frac{(\rho + 1) N}{4}}\right\rfloor$ ${\rm even}$ digits from MNIST
        \item $\frac{N}{2} - \left\lfloor{\frac{(\rho + 1) N}{4}}\right\rfloor$ ${\rm even}$ digits from MNIST-Corrupted
        \item $\frac{N}{2} - \left\lfloor{\frac{(\rho + 1) N}{4}}\right\rfloor$ ${\rm odd}$ digits from MNIST
        \item $\left\lfloor{\frac{(\rho + 1) N}{4}}\right\rfloor$ ${\rm odd}$ digits from MNIST-Corrupted
    \end{itemize}
    
\end{itemize}
This generates a dataset with balanced $Y$ and $Z$ with size $\frac{N}{2}$ each. For our experiments, we use $N=40000, \rho=0.98$. This makes $Y$ and $Z$ highly correlated, so that most ${\rm even}\,\, ({\rm odd})$ digits are ${\rm clean}\,\, ({\rm zigzag})$. For validation, we use $50\%$ of the training data.

\begin{figure}[b!!]%
    \centering
    \includegraphics[width=0.4\linewidth]{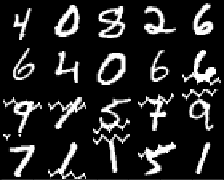}
    \caption{An example of data in MNIST-Correlation. Most ${\rm even}$ digits are ${\rm clean}$ while most ${\rm odd}$ digits contain a ${\rm zigzag}$ corruption.}
    \label{fig:mnist-correlation-example}
\end{figure}

\paragraph{\bf CelebA-Undersampled.} We modify the CelebA dataset \cite{liu2015deep} by undersampling the $({\rm Y=non\text{-}blonde,\, Z=female})$ subgroup in the training set. The original dataset contains $71629$ examples in this training subgroup, and we keep a random subset of $4054$ examples. This number is chosen to make the ratio of subgroup sizes equal in both classes $\left(\frac{4054}{66874} \approx \frac{1387}{22880}\right)$. We do not modify the validation or test datasets. 

This modification introduces a spurious correlation between hair-color and gender, which makes the dataset more appropriate for our setting. We preprocess images by resizing to $128 \times 128 \times 3$ before use.

\paragraph{\bf Waterbirds.} We use the Waterbirds dataset \cite{sagawa2019distributionally} and resize images to $224\times224\times3$ before use. Note that this differs from  the preprocessing used by \cite{sagawa2019distributionally}, who first resize to $256\times256\times3$ and then center-crop the image to $224\times224\times3$. The preprocessing they use makes the task easier, since some part of the (spurious) background is cropped out, while we retain the full image.

\paragraph{\bf ISIC.} We use the ISIC dataset \cite{codella2018skin} and resize images to $224 \times 224 \times 3$ before use.

\subsection{CycleGAN Training Details}
\label{sec:cyclegan_training_details}
We use the default hyperparameters suggested by \cite{zhu2017unpaired} for CycleGAN training, with batchnorm for layer normalization. We use Adam for optimization $(\beta_1 = 0.5)$ with a constant learning rate of $0.0002$ for both generators and both discriminators. 

\paragraph{\bf MNIST-Correlation.} Train on $200$ images each from both MNIST and MNIST-Corrupted ($100$ images per class) for $2500$ epochs with a batch size of $25$, cycle loss coefficient of $10.0$ and identity loss coefficient of $1.0$. We randomly rotate, pad and crop every image for training.

\paragraph{\bf CelebA-Undersampled.} Train separate CycleGANs for both classes. Train on $1000$ images each from both subgroups within the class for $4000$ epochs with a batch size of $16$, cycle loss coefficient of $10.0$ and identity loss coefficient of $1.0$. We flip inputs randomly (with probability $0.5$) and randomly crop up to $10\%$ of every image. Due to instability during training, we visually inspected samples generated on the training set at several checkpoints to pick the best model.

\paragraph{\bf Waterbirds.} Train separate CycleGANs for both classes. Train on $56$ and $184$ images each from both subgroups for the ${\rm landbird}$ and ${\rm waterbird}$ classes respectively. Train for $4000$ epochs with a batch size of $4$, cycle loss coefficient of $10.0$ and identity loss coefficient of $1.0$. We flip inputs randomly (with probability $0.5$) and randomly crop upto $10\%$ of every image.

\paragraph{\bf ISIC.} Train on $100$ images each from both benign subgroups (with and without bandaids) for $4000$ epochs with a batch size of $4$, cycle loss coefficient of $10.0$ and identity loss coefficient of $10.0$. We flip inputs randomly (with probability $0.5$) and randomly crop upto $10\%$ of every image. %

\subsection{Architectures and Training Information}
All training code is written in Python with tensorflow-2.0. All models are trained with Stochastic Gradient Descent (SGD), with a momentum of 0.9. In order to isolate the effect of our method, we do not use any data augmentation (such as pad and crop operations or random flips) when training the classifier.

\paragraph{\bf MNIST-Correlation.} We train a convolutional neural network from scratch, initialized with random weights. The architecture is provided below,

${\rm Conv2D(filters\!=\!32, kernel\!=\!3)} \rightarrow {\rm ReLU} \rightarrow {\rm Conv2D}(32, 3) \rightarrow {\rm ReLU} \rightarrow {\rm MaxPooling2D}({\rm pooling\!=\!2})
\\
\rightarrow {\rm Dropout}({\rm p\!=\!0.25}) \rightarrow {\rm Conv2D}(64, 3)  \rightarrow {\rm ReLU} \rightarrow {\rm Conv2D}(64, 3) \rightarrow {\rm ReLU} \rightarrow {\rm MaxPooling2D}(2) \rightarrow {\rm Dropout}(0.25) \\
\rightarrow {\rm Flatten}
\rightarrow {\rm Dense}({\rm units\!=\!64}) \rightarrow {\rm ReLU} \rightarrow {\rm Dropout}(0.5) \rightarrow {\rm Dense}(10) \rightarrow {\rm Softmax}$.

\paragraph{\bf Other datasets.} All models are fine-tuned using a ResNet-50 architecture, with pretrained ImageNet weights\footnote{The particular model used was taken from \url{https://github.com/qubvel/classification_models}.}. The only preprocessing common to all methods is standard ImageNet normalization using $\mu = [0.485, 0.456, 0.406], \sigma = [0.229, 0.224, 0.225]$.

\subsection{Hyperparameters}

For model selection, we use robust accuracy on the validation set\footnote{For the ISIC dataset, we additionally performed model selection using AUROC, as illustrated in Table~\ref{tab:isic}.}. The selected model's hyperparameters are then run 3 times, and the results averaged over these trials are reported in Table~\ref{tab::results-comparison-with-other-methods}. Below, we provide details of all hyperparameter sweeps, and in Table~\ref{tab::best-hyperparameters}, we include the best hyperparameters found for each method and dataset.

\subsubsection{CelebA-Undersampled}

We run sweeps for all methods over $50$ epochs.

\paragraph{\bf ERM.} Sweep over learning rates $\{0.0001, 0.00005, 0.00002, 0.00001 \}$ with weight decay fixed to $0.05$.

\paragraph{\bf GDRO.} Sweep over adjustment coefficients in $\{1.0, 3.0 \}$ and learning rates $\{0.0001, 0.00005 \}$ with weight decay fixed to $0.05$. 

\paragraph{\bf \methodabbrv.} Sweep over consistency penalties in $\{5.0, 10.0, 20.0, 50.0 \}$. Learning rate is fixed to $0.00005$, weight decay fixed to $0.05$ and the adjustment coefficient is fixed to $3.0$.

\subsubsection{Waterbirds}

We run sweeps for all methods over $500$ epochs.

\paragraph{\bf ERM.} Sweep over learning rates $\{0.001, 0.0001, 0.00001\}$ and weight decays $\{0.5, 0.001\}$.

\paragraph{\bf GDRO.} Sweep over learning rates $\{0.00001, 0.00005 \}$ and weighte decays $\{0.5, 0.05 \}$ with adjustment coefficient fixed to $1.0$ and batch size $24$. We also separately swept weight decays $\{1.0, 0.001 \}$ and adjustment coefficients over $\{ 1.0, 2.0\}$.

\paragraph{\bf \methodabbrv.} Sweep over consistency penalties in $\{100.0, 200.0 \}$ and learning rates $\{0.00005, 0.0001\}$. Weight decay fixed to $0.001$ and adjustment coefficient is fixed to $2.0$. Separately, we sweep over learning rates $\{0.00001, 0.00002, 0.00005, 0.0001\}$, fixing the consistency penalty to $200.0$, weight decay to $0.05$ and adjustment coefficient to $1.0$.

\subsubsection{MNIST-Correlation}

We run sweeps for all methods over 100 epochs. 

\paragraph{\bf ERM.} Sweep over learning rates $\{0.0001, 0.0002, 0.0005, 0.001\}$ and weight decays $\{0.0005, 0.05\}$.

\paragraph{\bf GDRO.} Sweep over learning rates $\{0.0001, 0.0002, 0.0005, 0.001\}$ and weight decays $\{0.0005, 0.05\}$. Adjustment coefficient is fixed to $1.0$.

\paragraph{\bf CDAT.} Sweep over domain loss coefficients $\{-0.1, -0.01, 0.1, 1.0 \}$. We fix learning rate to $0.001$ and weight decay to $0.0005$. We run CDAT for $400$ epochs, since it takes much longer to converge.

\paragraph{\bf IRM.} Sweep over IRM penalty $\{0.01, 0.1, 1.0, 10, 100, 1000, 10000\}$ and learning rates $\{0.0005, 0.001\}$. Weight decay is fixed to $0.0005$.

\paragraph{\bf \methodabbrv.} Sweep over consistency penalty weights $\{0.0, 2.0, 5.0, 10.0, 50.0\}$. Learning rate is fixed to $0.001$ and weight decay is fixed to $0.0005$.

\subsubsection{ISIC}
We run sweeps for all methods over 75 epochs.

\paragraph{\bf ERM.} Sweep over weight decays $\{0.5, 0.05, 0.00005\}$. Learning rate is fixed to $0.0001$.

\paragraph{\bf GDRO.} Sweep over learning rates $\{0.0001, 0.00001\}$ and weight decays $\{0.5, 0.05, 0.00005\}$. Adjustment coefficient is fixed to $0$.

\paragraph{\bf \methodabbrv.} Sweep over learning rates $\{0.0001, 0.00005\}$, weight decays $\{0.01, 0.05\}$, consistency penalties $\{10.0, 50.0\}$ and annealing rates $\{0.005, 0.002\}$.

\subsection{Mutual Information Measurement}
\label{sec:mi_measurement}
For the mutual information measurement experiment on MNIST-Correlation in Section~\ref{sec:benchmarks}, we additionally attach a domain prediction head to the final feature layer. This domain prediction head is then used to predict the subgroup $z$ of any example $x$. Note that this domain prediction head does not pass back gradients to the main model, it merely observes the learned representation and attempts to improve prediction accuracy of the subgroups using this. Intuitively, this captures how much information about the subgroups is available to be ``squeezed-out'' by the domain prediction head. This constitutes a use of Lemma~\ref{lmm:mi-adversarial} to estimate the mutual information, and we report the average cross-entropy loss (added to $\log 2$).

\subsection{Baseline Comparisons}
\label{sec:baselines}
We describe the baselines that we compare to, with implementations for each of these available in our code release.

\subsubsection{Methods}

\paragraph{\bf ERM.} We use standard training with a cross-entropy loss. ERM cannot take advantage of knowledge of the subgroups, so this constitutes a standard baseline that a practitioner might use to solve a task.

\paragraph{\bf GDRO.} This is our main baseline as described in Section~\ref{sec:method}, and uses a stochastic optimization method~\cite{sagawa2019distributionally}. GDRO uses subgroup information to optimize the worst-case loss over all subgroups. We note that GDRO requires the specification of an adjustment coefficient, and we describe the best found coefficients in Table~\ref{tab::best-hyperparameters}.

\paragraph{\bf CDAT.} We use a generic domain adversarial training approach using a domain prediction head attached to the last feature layer of the model $\phi(X)$. The domain head predicts the subgroup identity of the given example, and we use gradient reversal in order to erase domain information from the representation $\phi(X)$. We vary the magnitude of the gradient reversal on the domain loss (which we call the domain loss coefficient in Table~\ref{tab::best-hyperparameters}) in order to find the best-performing model.

\paragraph{\bf IRM.} We implement the IRM penalty~\cite{arjovsky2019invariant}, and treat the subgroups as separate environments across which the model should perform well.

\subsubsection{Ablations}

\paragraph{\bf Subgroup Pairing.} We simply take pairs of examples that lie in different subgroups and enforce consistency on them.

\paragraph{\bf Heuristic Augmentations.} We build a pipeline inspired by AugMix~\cite{hendrycks2019augmix} using the following operations: shearing, translation, rotation, flipping, contrast normalization, pixel inversion, histogram equalization, solarization, posterization, contrast adjustment, color enhancement, brightness adjustment, sharpness adjustment, cutout and mixup. We sample between 1 and 3 of these augmentations in a random order and apply them to the image.

\subsection{ISIC Spurious Correlations}
\label{sec:isic_spurious}
For completeness, we include a detailed evaluation for the ISIC dataset in Table~\ref{tab:isic_full}. Here, we highlight that regardless of what criterion is used for model selection between robust accuracy and AUROC, \methodabbrv\ exceeds the performance of the other methods.

For ISIC, we also create an alternate evaluation dataset with artificial images in order to test whether a model spuriously correlates the presence of a bandage with the benign cancer class. To construct this dataset, we use image segmentation to automatically extract images of the bandages from the benign cancer class, and superimpose them on images with malignant cancers. This allows us to generate the artificial subgroup of the malignant cancer class that would contain images with bandages. We use this dataset to highlight how \methodabbrv\ improves the model's dependence on this spurious feature in Figure~\ref{fig:isic}.

\begin{table}
\centering
\caption{\small Performance on the ISIC validation set.}
\resizebox{0.5\linewidth}{!}{%
\begin{tabular}{llll}
\toprule
{\bf Evaluation} & \multirow{2}{*}{\bf Method} &\multicolumn{2}{c}{ Model Selection Criterion}   \\
{\bf Metric} & & \textbf{Robust Acc.} & \textbf{AUROC}\\
 \midrule
 \textbf{Robust} & ERM & 65.59 (1.17) & 52.93 (10.27) \\   
  \textbf{Acc.} & GDRO & 64.97 (3.15) & 51.23 (1.93) \\   
     & CAMEL & {\bf 77.45} (0.35)  & {\bf 66.67} (3.03) \\ 
     \midrule 
 \textbf{AUROC} & ERM & {\bf 92.48} (0.80)  & {\bf 93.38} (0.14) \\
 & GDRO &   89.50 (2.50) & 91.83 (0.11)\\
 & CAMEL & {\bf 92.47} (0.38)  & {\bf 93.41} (0.52) \\
 \bottomrule
\end{tabular}%
}
\label{tab:isic_full}

\end{table}

\subsection{Alternative GAN Augmentation Baselines}
\label{sec:gan_baselines}
As noted in Section~\ref{sec:cyclegan}, Stage 1 of the model patching pipeline can be integrated with alternative domain translation models.
As an additional baseline, we compare to alternative GAN augmentation methods.
Typically, these methods are used as a data augmentation method, but not evaluated on robustness.

We consider the Augmented CycleGAN~\cite{almahairi2018augmented}, Data Augmentation GAN (DAGAN)~\cite{antoniou2017data} and StarGAN-v2~\cite{choi2020stargan} models, either when used in combination with ERM, or when as a part of the model patching baseline.
When used as a part of model patching, we replace the CycleGAN in Stage 1 with the alternative GAN model.

We used released code for Augmented CycleGAN and DAGAN to generate data for the Waterbirds dataset.
For StarGANv2, we used pre-trained models for Celeb-A.
We note that DAGAN is meant to be a self-contained data augmentation pipeline, so we did not consider it in conjunction with Model Patching. 

The results of this comparison is are shown in~\ref{tab:gan_baselines}.
In particular, these alternate models have poor robust performance when used purely for data augmentation.
Their performance improves when integrated in the model patching pipeline.

\begin{table}%
\centering
\caption{\small Comparisons to GAN Baselines on Waterbirds and CelebA-Undersampled.}
\resizebox{0.6\linewidth}{!}{%
\begin{tabular}{@{}llll@{}}
  \toprule
  \multicolumn{1}{c}{\multirow{2}{*}{{\bf Dataset}}} & \multicolumn{1}{c}{\multirow{2}{*}{{\bf GAN Model}}} & \multicolumn{2}{c}{{\bf Robust/Aggregate Acc.}} \\ \cmidrule(l){3-4}
  \multicolumn{1}{c}{} & \multicolumn{1}{c}{} & GAN + ERM   & GAN + Model Patching \\ \midrule
  Waterbirds           & CycleGAN            & 76.88/91.75 & \textbf{89.12}/90.89      \\
                       & Augmented\ CycleGAN   & 63.12/91.08 & \textbf{84.87}/86.44          \\
                       & DAGAN                & 73.12/90.28 & {---}                  \\ \midrule
  CelebA-Undersampled               & StarGAN v2           & 65.91/90.58 & \textbf{80.68}/89.33          \\ \bottomrule
\end{tabular}
}
\label{tab:gan_baselines}
\end{table}

\begin{table*}[t!]
\caption{The values of the best hyperparameters found for each dataset and method. }
\label{tab::best-hyperparameters}
\centering
\resizebox{\textwidth}{!}{%
\begin{tabular}{@{}lllllll@{}}
\toprule
\multirow{1}{*}{\bf Method} & \multirow{1}{*}{\bf Dataset} & \multicolumn{5}{c}{\bf Hyperparameters}                                             \\ \midrule
                        &                          & {\bf Learning Rate} & {\bf Weight Decay} & {\bf Batch Size} &                         &           \\ \cmidrule(l){3-7}
\multirow{4}{*}{\bf ERM}                     & {\bf MNIST-Correlation}        & 0.0001        & 0.05         &    100        &                         &           \\
                        & {\bf CelebA-Undersampled}      & 0.00005       & 0.05         &       16     &                         &           \\
                        & {\bf Waterbirds}               & 0.001         & 0.001        & 16         &                         &           \\
                        & {\bf ISIC}                     & 0.0001          & 0.005              & 24           &                         &           \\ 
                        &                   & 0.0001          & 0.00005              & 24           &                         &           \\ \midrule
                        &                          & {\bf Learning Rate} & {\bf Weight Decay} & {\bf Batch Size} & {\bf GDRO Adjustment}         &           \\ \cmidrule(l){3-7} 
\multirow{4}{*}{\bf GDRO}                    & {\bf MNIST-Correlation}        & 0.0005        & 0.0005       &      100     & 1.0                     &           \\
                        & {\bf CelebA-Undersampled}      & 0.0001        & 0.05         &    16       & 3.0                     &           \\
                        & {\bf Waterbirds}               & 0.00001       & 0.05         & 24         & 1.0                     &           \\
                        & {\bf ISIC}                     &   0.0001            & 0.05             &   24         &          0.0               &           \\ 
                        &                  &   0.0001            & 0.00005             &   24         &          0.0               &           \\ \midrule
                        &                          & {\bf Learning Rate} & {\bf Weight Decay} & {\bf Batch Size} & {\bf GDRO Adjustment}         & $\mathbb{\lambda}$ \\ \cmidrule(l){3-7} 
\multirow{4}{*}{\bf \methodabbrv}                   & {\bf MNIST-Correlation}        & 0.001         & 0.0005       &    100        & 1.0                     & 5.0       \\
                        & {\bf CelebA-Undersampled}      & 0.00005       & 0.05         &       16     & 3.0                     & 5.0       \\
                        & {\bf Waterbirds}               & 0.0001        & 0.001        & 16         & 2.0                     & 100.0     \\
                        & {\bf ISIC}                     &     0.0001          &   0.01           &     24       &          3.0               &  50.0\footnotemark         \\
                        &                 &     0.0001          &       0.01       &     24       &          3.0               &  10.0\footnotemark[2]         \\\midrule
                        &                          & {\bf Learning Rate} & {\bf Weight Decay} & {\bf Batch Size} & {\bf Domain Loss Coefficient} &           \\ \cmidrule(l){3-7} 
{\bf CDAT}                    & {\bf MNIST-Correlation}        & 0.001         & 0.0005       &     100       & -0.10                   &           \\ \midrule
                        &                          & {\bf Learning Rate} & {\bf Weight Decay} & {\bf Batch Size} & {\bf IRM Anneal Steps}             & {\bf IRM Penalty}          \\ \cmidrule(l){3-7} 
{\bf IRM}                     & {\bf MNIST-Correlation}        &     0.0005          &   0.0005           &     100       &     2000                    &  0.1         \\ \bottomrule
\end{tabular}%
}
\end{table*}
\footnotetext[2]{The consistency penalty is increased linearly on every step, from 0 to $\lambda$ with rates $0.002$ and $0.005$ for $\lambda=50.0$ and $\lambda=10.0$ respectively.}

\end{document}